\theoremstyle{plain}
\newtheorem{theorem}{Theorem}[section]
\newtheorem{lemma}[theorem]{Lemma}
\newtheorem{corollary}[theorem]{Corollary}
\theoremstyle{definition}
\theoremstyle{remark}
\icmltitlerunning{Taming graph kernels with random features}
\begin{document}

\twocolumn[
\icmltitle{Taming graph kernels with random features}

% It is OKAY to include author information, even for blind
% submissions: the style file will automatically remove it for you
% unless you've provided the [accepted] option to the icml2022
% package.

% List of affiliations: The first argument should be a (short)
% identifier you will use later to specify author affiliations
% Academic affiliations should list Department, University, City, Region, Country
% Industry affiliations should list Company, City, Region, Country

% You can specify symbols, otherwise they are numbered in order.
% Ideally, you should not use this facility. Affiliations will be numbered
% in order of appearance and this is the preferred way.
\icmlsetsymbol{equal}{*}

\begin{icmlauthorlist}
\icmlauthor{Krzysztof Choromanski}{gdm,columbia}
\end{icmlauthorlist}

\icmlaffiliation{gdm}{Google Robotics}
\icmlaffiliation{columbia}{Columbia University}

%\icmlaffiliation{xxx}{xxx}

\icmlcorrespondingauthor{Krzysztof Choromanski}{kchoro@google.com}

% You may provide any keywords that you
% find helpful for describing your paper; these are used to populate
% the "keywords" metadata in the PDF but will not be shown in the document
\icmlkeywords{Machine Learning, ICML}

\vskip 0.3in
]

% this must go after the closing bracket ] following \twocolumn[ ...

% This command actually creates the footnote in the first column
% listing the affiliations and the copyright notice.
% The command takes one argument, which is text to display at the start of the footnote.
% The \icmlEqualContribution command is standard text for equal contribution.
% Remove it (just {}) if you do not need this facility.

\printAffiliationsAndNotice{}  % leave blank if no need to mention equal contribution
%\printAffiliationsAndNotice{\icmlEqualContribution} % otherwise use the standard text.

\begin{abstract}
We introduce in this paper the mechanism of \textit{graph random features} (GRFs). GRFs can be used to construct \textbf{unbiased} randomized estimators of several important kernels defined on graphs' nodes, in particular the \textit{regularized Laplacian kernel}. As regular RFs for non-graph kernels, they provide means to scale up kernel methods defined on graphs to larger networks. Importantly, they give substantial computational gains also for smaller graphs, while applied in downstream applications. Consequently, GRFs address the notoriously difficult problem of cubic (in the number of the nodes of the graph) time complexity of graph kernels algorithms. We provide a detailed theoretical analysis of GRFs and an extensive empirical evaluation: from speed tests, through Frobenius relative error analysis to kmeans graph-clustering with graph kernels. We show that the computation of GRFs admits an embarrassingly simple distributed algorithm that can be applied if the graph under consideration needs to be split across several machines. 
We also introduce a (still unbiased) \textit{quasi Monte Carlo} variant of GRFs, $q$-$GRFs$, relying on the so-called \textit{reinforced random walks} that might be used to optimize the variance of GRFs. 
As a byproduct, we obtain a novel approach to solve certain classes of linear equations with positive and symmetric matrices.
\end{abstract}
\vspace{-8mm}
\section{Introduction}
\label{sec:intro}

Consider a positive definite kernel (similarity function) $\mathrm{K}:\mathbb{R}^{d} \times \mathbb{R}^{d} \rightarrow \mathbb{R}$. Kernel methods \citep{kernel-6, kernel-5, kernel-4, kernel-3, kernel-2, kernel-1} provide powerful mechanisms for modeling non-linear relationships between data points. However, as relying on the so-called \textit{kernel matrices} $\mathbf{K}=[\mathrm{K}(\mathbf{x}_{i},\mathbf{x}_{j})]_{i,j}$ for datasets $\mathcal{X}=\{\mathbf{x}_{1},...,\mathbf{x}_{N}\}$ under consideration, they have time complexity at last quadratic in the size of a dataset. 

To address this limitation, a mechanism of \textit{random features} (RFs) \citep{RF-survey, rahimi2007random, rksink} to linearize kernel functions was proposed. Random features rely on randomized functions: $\phi:\mathbb{R}^{d} \rightarrow \mathbb{R}^{m}$ that map datapoints from the original to the new space, where the linear (dot-product) kernels corresponds to the original kernel. i.e: 
\begin{equation}
\label{eq:kernel-linearized}
\mathrm{K}(\mathbf{x},\mathbf{y}) = \mathbb{E}[\phi(\mathbf{x})^{\top}\phi(\mathbf{\mathbf{y}})]
\end{equation}
The theory of random features for kernels was born with the seminal paper \cite{jlt-base}, proposing a simple randomized linear transformation $\phi$ (with Gaussian random matrices) to approximate dot-product kernels via dimensionality reduction ($m \ll d$). That mapping $\phi$ is often referred to as the \textit{Johnson-Lindenstrauss Transform} (JLT).

Interestingly, it turned out that for several nonlinear kernels, JLT can be modified by adding a nonlinear functions acting element-wise on the linearly transformed dimensions of the input vectors (and potentially by replacing Gaussian sampling mechanism and altering the constant multiplicative renormalization term), to obtain Eq. \ref{eq:kernel-linearized}. For the radial basis function kernels (RBFs) \cite{rbfs} and softmax kernels, this nonlinear functions are trigonometric transformations \cite{rahimi2007random}. For the Gaussian (an instantiation of RBFs) and softmax kernels one can also apply the so-called \textit{positive random features} leveraging exponential nonlinearties \cite{choromanski2020rethinking}. For the angular kernel, the sign function is used \citep{goemans, choromanski2017unreasonable}. Thus linearization schemes for different kernels are obtained by applying different element-wise nonlinear transformations, but surprisingly, the overall structure of the mapping $\phi$ is the same across different kernels.

RFs provide \textbf{unbiased} low-rank decomposition of the kernel matrix $\mathbf{K}$, effectively enabling the (approximate) computations with $\mathbf{K}$ to be conducted in sub-quadratic time, by-passing explicit materialization of $\mathbf{K}$. Even more importantly, they help to create nonlinear variants of the simpler linear methods (e.g kernel-SVM \cite{svn-kernel} as opposed to regular SVM \cite{svn} or kernel regression \cite{kernel-ridge-regreession} as opposed to linear regression \cite{linear-regression}) via the $\phi$-transformation.

So far we have considered points embedded in the regular Euclidean $\mathbb{R}^{d}$-space, but what if the space itself is non-Euclidean ? Upon discretization, such a space can be approximated by undirected weighted graphs (often equipped with shortest-path-distance metric) and in that picture the points correspond to graph nodes. It is thus natural to consider in that context \textit{graph kernels} $\mathrm{K}:\mathrm{V} \times \mathrm{V} \rightarrow \mathbb{R}$ defined on the set of nodes $\mathrm{V}$ of the graph. And indeed, several such kernels were proposed: \textit{diffusion}, \textit{regularized Laplacian}, \textit{$p$-step random walk}, \textit{inverse cosine} and more \citep{smola-kondor, diff-kernels, chung}. We want to emphasize that the term \textit{graph kernels} is also often used for kernels taking as input graphs, thus computing similarities between two graphs rather than two nodes in a given graph \cite{gks}. This class of kernels is not a subject of this paper.

Unfortunately, for most graph kernels, the computation of the kernel matrix $\mathbf{K}$ is at least \textbf{cubic} in the number of nodes of the graph. This is a major obstacle on the quest to make those methods more practical and mainstream kernel techniques. A tempting idea is to try to apply random features also for graph kernels, yet so far no analogue of the mechanism presented above for regular kernels was proposed (see: Sec. \ref{sec:related} for the additional discussion on some efforts to apply RF-based techniques in the theory of graph kernels in the broad sense). This is in striking contrast to kernels operating in $\mathbb{R}^{d}$, where not only are RFs widely adopted, but (as already discussed) the core underlying mechanism is the same for a large plethora of different kernels, irrespectively of their taxonomy, e.g. shift-invariant (Gaussian, Laplace, Cauchy) or not (softmax, dot-product). One of the main challenges is that no longer is the value of the graph kernel solely the property of its input-nodes, but also the medium (the graph itself), where those nodes reside. In fact, it does not even make sense to talk about the properties of graphs' nodes by abstracting from the whole graph (in this paper we do not assume any internal graph-node structure).   

This work is one of the first steps to develop rigorous theory of random features for graph kernels. We introduce here the mechanism of \textit{graph random features} (GRFs). GRFs can be used to construct \textbf{unbiased} randomized estimators of several important kernels defined on graphs' nodes, in particular the \textit{regularized Laplacian kernel}. As regular RFs for non-graph kernels, they provide means to scale up kernel methods defined on graphs to larger networks. Importantly, they give substantial computational gains also for smaller graphs, while applied in downstream applications. Consequently, GRFs address the notoriously difficult problem of cubic (in the number of the nodes of the graph) time complexity of graph kernels algorithms. We provide a detailed theoretical analysis of GRFs and an extensive empirical evaluation: from speed tests, through Frobenius relative error analysis to kmeans graph-clustering with graph kernels. We show that the computation of GRFs admits an embarrassingly simple distributed algorithm that can be applied if the graph under consideration needs to be split across several machines. 
We also introduce a (still unbiased) \textit{quasi Monte Carlo} variant of GRFs, $q$-$GRFs$, relying on the so-called \textit{reinforced random walks} \cite{rrws} that might be used to optimize the variance of GRFs. 
As a byproduct, we also obtain a novel approach for solving linear equations with positive and symmetric matrices.

The q-GRF variants of our algorithm (that, as we show, still provide unbiased estimation) aim to translate the geometric techniques of structured random projections, that were recently successfully applied in the theory of RFs (see: Sec. \ref{sec:related} for detailed discussion) to non-Euclidean domains defined by graphs. To the best of our knowledge, it was never done before. We propose this idea in the paper, yet leave to future work comprehensive theoretical and empirical analysis.

\vspace{-3mm}
\section{Related work}
\label{sec:related}

Probabilistic techniques are used in different contexts in the theory of graph kernels. Sampling algorithms can be applied to make the computations of kernels taking graphs as inputs (rather than graphs' nodes) feasible \citep{deep-gks, efficient-graphlet, leskovec, Orbanz2017SubsamplingLG, bressan, sna-paper}. Examples include in particular \textit{graphlet} kernels, where counters of various small-size sub-graphs are replaced by their counterparts obtained via sub-graph sub-sampling (often via random walks) \citep{ribeiro, lingfeiwu, graphlet-faster}. They provide more tractable, yet biased estimation of the original objective. 

Other approaches define new classes of graph kernels starting from the RF-inspired representations. Examples include structure-aware random Fourier kernels from \cite{fourier-kernels} that apply the so-called \textit{$L$-hop sub-graphs} processed by graph neural networks (GNNs) to obtain their latent embeddings and define a kernel by taking their dot products. Even though here we proceed in a reverse order - starting from well-known graph kernels defined by deterministic closed-form expressions and producing their unbiased RF-based representations, there is an interesting link between that line of research and our work. GRFs can be cast as representations of similar core structure, but much more specific, GNN-free and consequently, providing simplicity, low computational time and strong theoretical guarantees. 

Quasi Monte Carlo (q-MC) methods, using correlated ensembles of samples are powerful MC techniques. New q-MC methods applying block-orthogonal ensembles of the random Gaussian vectors (the so-called \textit{orthogonal random features} or ORFs) were shown to improve various RF-based kernels' estimators \citep{yu2016orthogonal, dpp-orfs, grfs, choromanski2017unreasonable, choromanski2020rethinking, likhosherstov2022chefs, psrn}.

\section{Graph random features (GRFs)}
\label{sec:algorithms}
\subsection{Hidden Gram-structure of squared inverse matrices}
\label{sec:gram}

We start our analysis, that will eventually lead us to the definition of GRFs, by forgetting (only temporarily) about graph theory and focusing on positive symmetric matrices.
\subsubsection{Preliminaries}
Consider a positive symmetric matrix $\mathbf{U} \in \mathbb{R}^{N \times N}$ of spectral radius 
$\rho(\mathbf{U}) \overset{\mathrm{def}}{=} \max_{\lambda \in \Lambda(\mathbf{U})} |\lambda|$ satisfying: 
$\rho(\mathbf{U}) < 1$, where $\Lambda(\mathbf{U})$ stands for the set of eigenvalues of $\mathbf{U}$ ($\Lambda(\mathbf{U}) \subseteq \mathbb{R}$ since $\mathbf{U}$ is symmetric). Note that under this condition, the following series converges:
\begin{equation}
\mathbf{I}_{N}+\mathbf{U} + \mathbf{U}^{2} + ...
\end{equation}
and is equal to $(\mathbf{I}_{N}-\mathbf{U})^{-1}$. Furthermore, we have:
\begin{lemma}
\label{first-lemma}
If $\mathbf{U} \in \mathbb{R}^{N \times N}$ is a a positive symmetric matrix with $\rho(\mathbf{U}) < 1$ then the following holds:
\begin{equation}
\mathbf{I}_{N} + 2\mathbf{U} + 3\mathbf{U}^{2} + ... = (\mathbf{I_{N}-\mathbf{U}})^{-2}
\end{equation}
\begin{proof}
We have: $\mathbf{I}_{N} + 2\mathbf{U}+3\mathbf{U}^{2} + ... = (\mathbf{I}_{N}+\mathbf{U}+\mathbf{U}^{2}+...)+(\mathbf{U}+2\mathbf{U}^{2}+3\mathbf{U}^{3}+...)$. Thus we have:
\begin{align}
\begin{split}
\mathbf{I}_{N} + 2\mathbf{U}+3\mathbf{U}^{2} + ... =
(\mathbf{I}_{N}+\mathbf{U}+\mathbf{U}^{2}+...) + \\ \mathbf{U}(\mathbf{I}_{N} + 2\mathbf{U}+3\mathbf{U}^{2} + ...)
\end{split}
\end{align}
Therefore, from what we have said above, we obtain:
\begin{align}
\begin{split}
(\mathbf{I}_{N}-\mathbf{U})(\mathbf{I}_{N} + 2\mathbf{U}+3\mathbf{U}^{2} + ...) = (\mathbf{I}_{N}-\mathbf{U})^{-1}
\end{split}
\end{align}
and, consequently: $\mathbf{I}_{N} + 2\mathbf{U}+3\mathbf{U}^{2} + ...=(\mathbf{I}_{N}-\mathbf{U})^{-2}$.
\end{proof}
\end{lemma}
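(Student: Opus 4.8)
The plan is to reduce the claim to the geometric-series identity already established in the preamble, namely $\mathbf{I}_N + \mathbf{U} + \mathbf{U}^2 + \cdots = (\mathbf{I}_N - \mathbf{U})^{-1}$, which holds because $\rho(\mathbf{U}) < 1$. Writing $\mathbf{S} = \sum_{k=0}^{\infty}(k+1)\mathbf{U}^k$ for the series on the left, I would first split each coefficient $k+1$ as $1 + k$, so that $\mathbf{S}$ decomposes into the plain geometric series plus a ``weighted'' tail $\sum_{k=0}^{\infty} k\,\mathbf{U}^k$. The key observation is that this weighted tail is itself $\mathbf{U}\,\mathbf{S}$: reindexing gives $\sum_{k=1}^\infty k\,\mathbf{U}^k = \mathbf{U}\sum_{k=1}^\infty k\,\mathbf{U}^{k-1} = \mathbf{U}\sum_{j=0}^\infty (j+1)\mathbf{U}^j = \mathbf{U}\,\mathbf{S}$. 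This yields the self-referential relation $\mathbf{S} = (\mathbf{I}_N - \mathbf{U})^{-1} + \mathbf{U}\,\mathbf{S}$.

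From here the algebra is immediate: rearranging gives $(\mathbf{I}_N - \mathbf{U})\mathbf{S} = (\mathbf{I}_N - \mathbf{U})^{-1}$, and since $\mathbf{I}_N - \mathbf{U}$ is invertible (its eigenvalues are $1 - \lambda$ for $\lambda \in \Lambda(\mathbf{U})$, and $|\lambda| \le \rho(\mathbf{U}) < 1$ forces $1 - \lambda > 0$), left-multiplying by $(\mathbf{I}_N - \mathbf{U})^{-1}$ produces $\mathbf{S} = (\mathbf{I}_N - \mathbf{U})^{-2}$, exactly the claimed identity.

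The only genuine subtlety, and the step I would be most careful about, is justifying that $\mathbf{S}$ converges and that the rearrangements above are legitimate. Because $\mathbf{U}$ is symmetric, $\|\mathbf{U}\|_2 = \rho(\mathbf{U}) < 1$, so $\|(k+1)\mathbf{U}^k\|_2 \le (k+1)\rho(\mathbf{U})^k$, and $\sum_k (k+1)\rho(\mathbf{U})^k$ converges; hence $\mathbf{S}$ converges absolutely in operator norm and the regrouping and reindexing of terms are valid. An alternative route that sidesteps the series manipulation entirely is to diagonalize $\mathbf{U} = \mathbf{Q}\mathbf{D}\mathbf{Q}^\top$ with $\mathbf{Q}$ orthogonal and $\mathbf{D}$ diagonal, reducing the matrix identity to the scalar fact $\sum_{k=0}^\infty (k+1)\lambda^k = (1-\lambda)^{-2}$ for each $|\lambda| < 1$ (the termwise derivative of the geometric series), and then conjugating back by $\mathbf{Q}$; I would keep this spectral argument in reserve as an independent confirmation of the result.
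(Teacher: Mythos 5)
Your proposal is correct and follows essentially the same route as the paper's own proof: the same splitting of the coefficient $k+1$ into $1+k$, the same recognition of the weighted tail as $\mathbf{U}\,\mathbf{S}$, and the same rearrangement $(\mathbf{I}_N-\mathbf{U})\mathbf{S} = (\mathbf{I}_N-\mathbf{U})^{-1}$ yielding the result. Your explicit justification of absolute convergence via $\|\mathbf{U}\|_2 = \rho(\mathbf{U}) < 1$ (and the spectral argument held in reserve) is a welcome addition of rigor that the paper leaves implicit, but it does not change the underlying argument.
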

We will derive an algorithm for unbiasedly estimating $(\mathbf{I}_{N}-\mathbf{U})^{-2}$ with the Gram-matrix $\mathbf{M} = [\phi(i)^{\top}\phi(j)]_{i,j=1,...,N}$ for a randomized mapping: $\phi:\mathbb{N} \rightarrow \mathbb{R}^{N}$. We will refer to $\phi(k)$ for a given $k \in \{1,...,N\}$ as a \textit{signature vector} for $k$ (see also Fig. \ref{fig:phi} for the illustration of signature vectors).

\subsubsection{Computing signature vectors}
\label{sec:sign-vectors}

We are ready to bring back the graphs. We interpret $\mathbf{U}=[u_{i,j}]_{i,j=1,...,N}$ as a weighted adjacency matrix of the weighted graph $\mathrm{G}_{\mathbf{U}}$ with vertex set $\mathrm{V}=\mathrm{V}(\mathrm{G}_{\mathbf{U}})$ of $N$ vertices, where the weight between node $i$ and $j$ is given as $u_{i,j}$. We then calculate signature vectors via random walks on $\mathrm{G}_{\mathbf{U}}$. The algorithm for computing $\phi(i)$ for a specific node $i$ is given in Algorithm 1 box. Calculations of $\phi(i)$ for different $i$ are done independently (and therefore can be easily parallelized). The algorithm works as follows.

\begin{algorithm}[h]
\SetAlgoLined
\SetKwInOut{Input}{Input}
\Input{graph $\mathrm{G}_{\mathbf{U}}$, termination probability $p_{\mathrm{term}}$, sampling strategy $\mathrm{sample}$, \# of random walks $m$, node $i$.}
\KwResult{signature vector $\phi(i)$.}
\textbf{Main: \\}
1. Initialize: $\phi(i)[j]=0$ for $j=1,2,...,N$.\\
2. Initialize: $\mathcal{H} \leftarrow \emptyset$.\\
3. \textbf{for } $t=1,...,m$ \textbf{do}\\
\phantom{xxx} \textbf{(a)} initialize: $\mathrm{load}=1$ \\
\phantom{xxx} \textbf{(b)} initialize: $\mathrm{current}\_\mathrm{vertex} \leftarrow i$ \\
\phantom{xxx} \textbf{(c)} update: $\phi(i)[i] \leftarrow \phi(i)[i]+ 1$ \\
\phantom{xxx} \textbf{(d)} \textbf{while} (\textrm{not } \textrm{terminated}) \textbf{do} \\
\phantom{xxxxxxx} 1. assign: $v=\mathrm{current}\_\mathrm{vertex}$ \\
\phantom{xxxxxxx} 2. $w, p = \mathrm{sample}
(v,\mathrm{G}_{\mathbf{U}},\mathcal{H})$ \\
\phantom{xxxxxxx} 3. update: $\mathrm{current}\_\mathrm{vertex}  \leftarrow w$ \\
\phantom{xxxxxxx} 4. update: $\mathrm{load}  \leftarrow \mathrm{load} \cdot u_{v,w} \textrm{ }/\textrm{ } p(1-p_{\mathrm{term}})$ \\
\phantom{xxxxxxx} 5. update: $\phi(i)[w] \leftarrow \phi(i)[w] + \mathrm{load}$ \\
\phantom{xxxxxxx} 6. update: $\mathcal{H} \leftarrow \mathcal{H}.\mathrm{add}((v,w))$ \\
4. renormalize: $\phi(i) \leftarrow \phi(i) \textrm{ } / \textrm{ } m$\\
\caption{Computing a signature vector for a given $i$.}
\label{alg:1}
\end{algorithm}
The signature vector is zeroed out in initialization. In the initialization, we also zero-out data structure \textit{history} $\mathcal{H}$ which maintains some particular function of the list of visited edges (more details later). A number $m$ of random walks, all initiated at point $i$ is conducted. In principle, they can also
be parallelized, but here, for the clarity of the analysis, we assume that they are executed 
in a given order. 

\begin{figure}[!htb]
\vspace{-3mm}
  \includegraphics[width=0.95\linewidth]{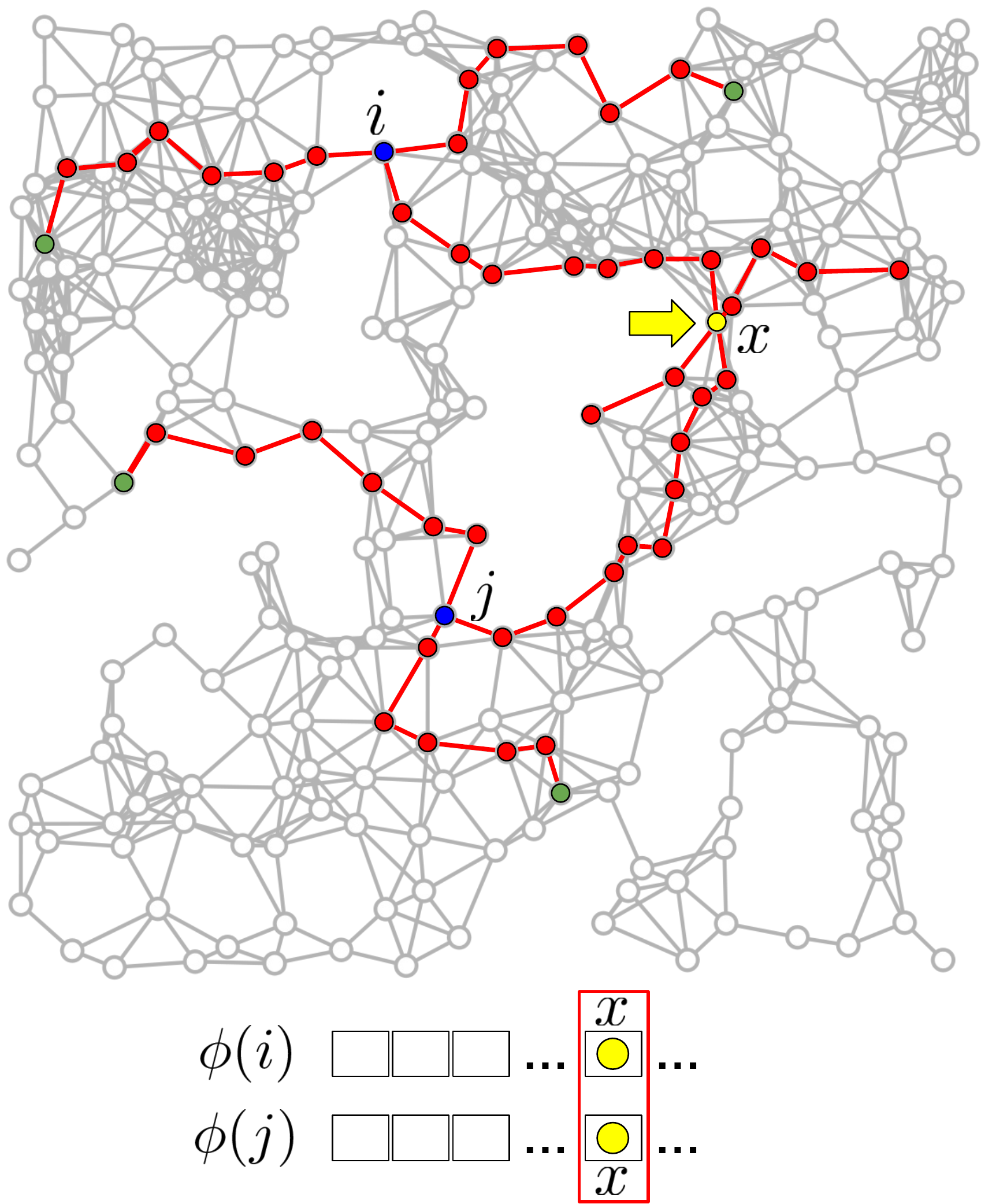}
\vspace{-4mm}
\caption{\small{The pictorial representation of signature vectors. The contibutions to the dot-product between two signature vectors come from the vertices that were visited by both vectors.}}
\label{fig:phi}
% \vspace{-2.4mm}
\end{figure}

Each random walk carries a $\mathrm{load}$ that is being updated every time the termination state has not been reached. Before each transition, the termination is reached independently with a given probability $p_{\mathrm{term}}$.
The update-rule of the $\mathrm{load}$ is given in point 4 in the Algorithm 1 box. Strategy $\mathrm{sample}$ takes as input: (1) the current vertex $v$, (2) entire graph, (3) the function of the history of visits and outputs: (1) a neighbor $w$ of $v$ according to a particular randomized sampling strategy as well as: (2) its corresponding probability $p$. Every time a new vertex is reached, the value of its corresponding signature vector dimension is increased by the most updated value of the $\mathrm{load}$, as given in point 5 in the Algorithm 1 box.
When all the walks terminate, the signature vector is renormalized by the multiplicative term $\frac{1}{m}$ and returned.
\vspace{-3mm}
\paragraph{Sampler \& the history of visits:} The simplest $\mathrm{sample}$ strategy is to choose a neighbor $w$ of a current vertex $v$ uniformly at random from the set of all neighbors $\mathrm{N}_{v}$ of $v$ in $\mathrm{G}_{\mathbf{U}}$. In this setting: $p=\frac{1}{\mathrm{deg}(v)}$, where $\mathrm{deg}(v)$ stands for the degree of $v$ defined as the size $|\mathrm{N}_{v}|$ of its neighborhood. In that setting, $\mathcal{H}$ is not needed and thus we have: $\mathcal{H} = \emptyset$. We show in Sec. \ref{sec:experiments} that this strategy works very well in practice. Another option is to take: $p=p(v,w)=\frac{u_{v,w}}{\sum_{z \in \mathrm{N}_{v}}u_{v,z}}$, i.e. make probabilities proportional to weights. In that setting, history is also not needed. In Sec. \ref{sec:thm}, we analyze in more detail the q-GRF setting, where nontrivial $\mathcal{H}$ is useful. This history can be used to diversify the set of random walks, by de-prioritizing edges that were already selected in previous random walks. Thus it might have the same positive effect on the estimation accuracy as ORFs from \cite{yu2016orthogonal} that diversify the set of random projection vectors. However detailed analysis of such strategies is beyond the scope of this paper. Most importantly, as we show next, regardless of the particular instantiations of $\mathrm{sample}$, signature vectors computed in Algorithm 1 indeed provide an unbiased estimation of the matrix $(\mathbf{I}_{N}-\mathbf{U})^{-2}$ (proof in Sec. \ref{sec:thm}):

\begin{theorem}
\label{thm:core}
Denote by $\mathbf{B} \in \mathbb{R}^{N \times N}$ a matrix with rows given by $\phi(i)^{\top}$ for $i=1,...,N$, computed according to Algorithm 1 and by $\mathbf{B}^{\prime}$ its independent copy (obtained in practice by constructing another indeepentent set of random walks). Assume that method $\mathrm{sample}$ is not degenerate, i.e. it assigns a positive probability $p(v,w)$ for every neighbor $w$ of $v$. Then the following is true:
\begin{equation}
(\mathbf{I}_{N}-\mathbf{U})^{-2} = \mathbb{E}[\mathbf{B}(\mathbf{B}^{\prime})^{\top}].
\vspace{-2.5mm}
\end{equation}
\end{theorem}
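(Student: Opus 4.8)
The plan is to exploit the independence of $\mathbf{B}$ and its copy $\mathbf{B}^{\prime}$ to reduce the claim to a per-node first-moment computation, and then to evaluate $\mathbb{E}[\phi(i)]$ by summing over all walks grouped by the graph-theoretic path they trace. Writing $[\mathbf{B}(\mathbf{B}^{\prime})^{\top}]_{i,j} = \phi(i)^{\top}\phi^{\prime}(j)$, where $\phi(i)$ is produced by $\mathbf{B}$ and $\phi^{\prime}(j)$ by the independent copy $\mathbf{B}^{\prime}$, independence gives $\mathbb{E}[\phi(i)^{\top}\phi^{\prime}(j)] = \mathbb{E}[\phi(i)]^{\top}\,\mathbb{E}[\phi(j)]$ (using that $\phi^{\prime}(j)$ is distributed as $\phi(j)$). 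This is the one place where the independent copy is essential: it lets the expectation factor even though the $m$ walks inside a single run of Algorithm~\ref{alg:1} may be correlated through the shared history $\mathcal{H}$. So the entire statement follows once I show that $\mathbb{E}[\phi(i)]$ equals the $i$-th row of $(\mathbf{I}_N-\mathbf{U})^{-1}$; the symmetry of $(\mathbf{I}_N-\mathbf{U})^{-1}$ then turns the product of rows $i$ and $j$ into $[(\mathbf{I}_N-\mathbf{U})^{-1}(\mathbf{I}_N-\mathbf{U})^{-1}]_{i,j} = [(\mathbf{I}_N-\mathbf{U})^{-2}]_{i,j}$, which is exactly the target by Lemma~\ref{first-lemma}.

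To compute $\mathbb{E}[\phi(i)[k]]$, note that the $1/m$ renormalization averages $m$ identically distributed walks, so it suffices to analyze the contribution of a single walk. I would fix a walk and decompose its contribution to coordinate $k$ as a sum over the discrete paths $\omega = (i = v_0, v_1, \ldots, v_s = k)$ of every length $s \geq 0$ that end at $k$ (the length-$0$ term accounts for the initialization step (c)). The key step is the \emph{importance-sampling cancellation}: for a fixed realized path of length $s$, the probability of following it --- a product over steps of the survival factor $(1-p_{\mathrm{term}})$ and the sampler probability $p(v_{r-1},v_r)$ --- multiplies against the accumulated $\mathrm{load} = \prod_{r=1}^{s} u_{v_{r-1},v_r}/\bigl(p(v_{r-1},v_r)(1-p_{\mathrm{term}})\bigr)$, so that all sampler probabilities and all termination factors cancel and the expected contribution of that path collapses to $\prod_{r=1}^{s} u_{v_{r-1},v_r}$. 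Summing over all length-$s$ paths from $i$ to $k$ recognizes $\sum_{\omega} \prod_r u_{v_{r-1},v_r} = [\mathbf{U}^{s}]_{i,k}$, and summing over $s \geq 0$ yields the Neumann series $\sum_{s\geq 0}[\mathbf{U}^{s}]_{i,k} = [(\mathbf{I}_N-\mathbf{U})^{-1}]_{i,k}$, which converges because $\rho(\mathbf{U})<1$.

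The main obstacle is making the single-walk analysis rigorous in the presence of the history $\mathcal{H}$, which can make the sampler probabilities $p(v_{r-1},v_r)$ depend on previously traversed edges (both across the $m$ walks and within a walk via step 6). I would handle this by conditioning: for each walk I condition on the entire history available to it and observe that the cancellation above is purely algebraic and holds path-by-path for whatever positive probabilities the sampler happens to assign --- here the non-degeneracy hypothesis $p(v,w)>0$ is exactly what guarantees the division in the $\mathrm{load}$ is well-defined and that no path is lost. Consequently the conditional expected contribution of each walk to coordinate $k$ is $[(\mathbf{I}_N-\mathbf{U})^{-1}]_{i,k}$ irrespective of the conditioning, so the history neither biases $\mathbb{E}[\phi(i)]$ nor couples it to the sampler's memory, and taking expectations over the history leaves the first moment unchanged. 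The remaining technical point is justifying the interchange of expectation with the infinite sum over path lengths: since the entries $u_{v,w}\geq 0$, every term is nonnegative and Tonelli's theorem permits the interchange unconditionally, while $\rho(\mathbf{U})<1$ guarantees the resulting series is finite (and, because $p_{\mathrm{term}}>0$, walks terminate almost surely, so $\phi(i)$ is a.s.\ a finite vector).
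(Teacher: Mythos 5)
Your proof is correct, but it takes a genuinely different route from the paper's. The paper never factorizes the expectation: it expands the dot product $\phi(i)^{\top}\phi(j)$ jointly as a sum over \emph{pairs} of walks that meet at a common vertex $x$, applies the importance-sampling cancellation to that double sum, and is left with $\sum_{\omega \in \Omega(i,j)}(\mathrm{len}(\omega)+1)w(\omega)$, the factor $\mathrm{len}(\omega)+1$ arising because a walk from $i$ to $j$ of length $L$ can be split at any of its $L+1$ vertices into an $i$-to-$x$ piece and a $j$-to-$x$ piece; this is then matched against the series $\mathbf{I}_{N}+2\mathbf{U}+3\mathbf{U}^{2}+\dots=(\mathbf{I}_{N}-\mathbf{U})^{-2}$, which is precisely what Lemma~\ref{first-lemma} exists for. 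You instead use independence of $\mathbf{B}$ and $\mathbf{B}^{\prime}$ to collapse everything to the first moment $\mathbb{E}[\phi(i)]$, identify it with the $i$-th row of $(\mathbf{I}_{N}-\mathbf{U})^{-1}$ via the plain Neumann series, and finish by symmetry of $(\mathbf{I}_{N}-\mathbf{U})^{-1}$; note that this route never actually uses Lemma~\ref{first-lemma} (your closing citation of it is superfluous, since $(\mathbf{I}_{N}-\mathbf{U})^{-1}(\mathbf{I}_{N}-\mathbf{U})^{-1}=(\mathbf{I}_{N}-\mathbf{U})^{-2}$ is immediate) nor the walk-splitting count. Your version buys two things: it makes explicit exactly where the independent copy is needed (the factorization is what would fail on diagonal entries $i=j$, or with history-correlated walks, if $\mathbf{B}$ were used twice --- a point the paper's proof glosses over by writing $\phi(i)^{\top}\phi(j)$ for walks it then treats as independent in the expectation), and it delivers the stronger by-product $\mathbb{E}[\mathbf{B}]=(\mathbf{I}_{N}-\mathbf{U})^{-1}$, i.e. unbiasedness of the signature vectors for the inverse itself. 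The paper's joint pair-of-walks expansion buys something else: it is the computation that extends directly to second moments, and the variance analysis in Sec.~\ref{sec:appendix_concentration} is built on exactly that expansion, which a pure first-moment factorization cannot produce.
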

\paragraph{Time complexity \& signature vector maps:} To compute $\phi(i)$ for every $i$, we need to run in every node $m$ random walks, each of the expected length $l=\frac{1}{p_{\mathrm{term}}}$, thus the expected time complexity of computing $\mathbf{B},\mathbf{B}^{\prime}$ is $O(\frac{Nmt_{\mathrm{sam}}}{p_{\mathrm{term}}}+T_{\mathrm{rep}})$, where: $t_{\mathrm{sam}}$ stands for time complexity of the mechanism $\mathrm{sample}$ and $T_{\mathrm{rep}}$ for the space complexity of storing the representation of $\mathrm{G}_{\mathbf{U}}$. For instance, if $\mathbf{U}$ is sparse, graph adjacency list representation is a right choice. In principle, even more compact representations are possible if graph-edges do not need to be stored explicitly. 
For the most straightforward uniform sampling strategy, we have: $t_{\mathrm{sam}}=O(1)$ and for many more sophisticated ones $t_{\mathrm{sam}}$ is proportional to the average degree of a vertex.
We assume that in all considered representations there is an indexing mechanism for the neighbors of any given vertex and a vertex of a particular index can be retrieved in $O(1)$ time. Note that for $p_{\mathrm{term}} \gg \frac{m t_{\mathrm{sam}}}{N}$, $\mathbf{B},\mathbf{B}^{\prime}$ should be kept in the implicit manner (this is a pretty conservative lower bound; in practice different walks will share many vertices), since most of the entries of each of its rows are equal to zero in this case. Thus for every $i$, we can store a dictionary with keys corresponding to visited vertices $w$ and values to the values of $\phi_{i}[w]$. We call such a dictionary a \textit{signature vector map} (or $\mathrm{svmap}$). In practice, one can choose large $p_{\mathrm{term}}$ without accuracy drop (see: Sec. \ref{sec:experiments}).   

\subsubsection{Trimming RFs: anchor points and JLT}
\label{sec:anchor_jlt}

Algorithm 1 provides substantial computational gains in calculating $(\mathbf{I}_{N}-\mathbf{U})^{-2}$, replacing $O(N^{3})$ time complexity with expected $O(\frac{Nm}{p_{\mathrm{term}}}+T_{\mathrm{rep}})$. Furthermore, signature vectors $\phi(i)$ in practice can be often stored efficiently (see: our discussion on $\mathrm{svmaps}$ above). Interestingly, there are also easy ways to directly control their "effective dimensionality" via one of two simple to implement tricks, provided below.
\vspace{-3mm}
\paragraph{Anchor points:} This technique samples randomly $K < N$ points from the set of all $N$ vertices of $\mathrm{G}_{\mathbf{U}}$. We call this set: \textit{anchor points}. Signature vectors are updated only in anchor points (but we still upload $\mathrm{load}$ for each transition) and thus effectively each of them has dimensionality $K$. In principle, different sampling strategies are possible and analyzing all of them in outside the scope of this work. The simplest approach is to sample uniformly at random a $K$-element subset of $\mathrm{V}$ with no repetitions. In this setting, if we replace term $\frac{1}{m}$ in point 4. of the Algorithm 1 box with $\frac{N}{Km}$ term, the estimator remains unbiased (see: Sec. \ref{sec:thm}).
\vspace{-3mm}
\paragraph{JLTed signature vectors:} The more algebraic approach to trimming the dimensionality of signature vectors is to apply Johnson-Lindenstrauss Transform (see: Sec. \ref{sec:intro}). We simply replace $\phi(i)$ with: $\phi_{\mathrm{JLT}}(i) = \frac{1}{\sqrt{K}}\mathbf{G}\phi(i)$ after all the computations. Here $\mathbf{G} \in \mathbb{R}^{K \times N}$ is the Gaussian matrix with entries taken independently at random from $\mathcal{N}(0,1)$ and $K$ is the number of RFs used in JLT. Matrix $\mathbf{G}$ is sampled independently from all the walks and is the same for all the nodes. The unbiasedness of the dot-product kernel estimation with JLTs \cite{johnson1984extensions} combined with Theorem \ref{thm:core}, provides the unbiasedness of the overall mechanism.

\subsection{From squared inverse matrices to graph kernels}

\subsubsection{Preliminaries}

We are ready to show how the results obtained in Sec. \ref{sec:gram} can be applied for graph kernels. 
For a given undirected weighted loopless graph $\mathrm{G}$ with a weighted symmetric adjacency matrix $\mathbf{W} \in \mathbb{R}^{N \times N}$, we consider the family of \textit{$d$-regularized Laplacian kernels} defined as follows for
$\mathrm{deg}_{\mathbf{W}}(i) \overset{\mathrm{def}}{=} \sum_{j \in \mathrm{N}_{i}}\mathbf{W}(i,j)$: 
\begin{equation}
\label{eq:d-kernels}
[\mathrm{K}_{\mathrm{lap}}^{d}(v,w)]_{i,j=1,...,N} = (\mathbf{I}_{N}+\sigma^{2}\tilde{\mathbf{L}}_{\mathrm{G}})^{-d},
\end{equation}
for a symetrically normalized Laplacian $\tilde{\mathbf{L}}_{\mathrm{G}} \in \mathbb{R}^{N \times N}$:
\begin{equation}
\tilde{\mathbf{L}}_{\mathrm{G}}(v,w) =
\begin{cases}
    1       & \quad \text{if } v=w\\
    -\frac{\mathbf{W}(v,w)}{\sqrt{\mathrm{deg}_{\mathbf{W}}(v)\mathrm{deg}_{\mathbf{W}}(w)}}  & \quad \text{if } (v,w) \textrm{ is an edge}
\end{cases}
\end{equation}

For $d=1$, we get popular regularized Laplacian kernel. As we show in Sec. \ref{sec:app_lap}, as long as the $1$-regularized Laplacian kernel is well defined (i.e. matrix $\mathbf{I}_{N}+\sigma^{2}\tilde{\mathbf{L}}_{\mathrm{G}}$ is invertible and the inverse has positive entries), the $d$-regularized Laplacian kernel is a valid positive definite kernel.

\subsubsection{Signature vectors are RFs for the $2$-regularized Laplacian kernel}

We will start with $d=2$ since it turns out that the mechanism of signature vectors can be straightforwardly applied there to obtain corresponding GRFs. Note that the following is true for $\mathbf{U} \in \mathbb{R}^{N \times N}$ defined as:
$\mathbf{U}=[u_{i,j}]_{i,j=1,...,N}$, where $u_{i,j} = \frac{\sigma^{2}}{\sigma^{2}+1}\frac{\mathbf{W}(i,j)}{\sqrt{\mathrm{deg}_{\mathbf{W}}(i)\mathrm{deg}_{\mathbf{W}}(j)}}$:
\begin{equation}
\label{eq:case_two}
\mathbf{I}_{N}+\sigma^{2}\tilde{\mathbf{L}}_{\mathrm{G}} = (\sigma^{2}+1)(\mathbf{I}_{N} - \mathbf{U})
\end{equation}

Equation $\ref{eq:case_two}$, together with Theorem \ref{thm:core}, immediately provide the GRF-mechanism for the $2$-regularized Laplacian kernel:

\begin{corollary}
\label{cor:case_two}
Consider a kernel matrix of the $2$-regularied Laplacian kernel for a given graph $\mathrm{G}$ with $N$ nodes $\{1,2,...,N\}$.
Then the following holds:
\begin{equation}
(\mathbf{I}_{N}+\sigma^{2}\tilde{\mathbf{L}}_{\mathrm{G}})^{-2} = \mathbb{E}[\mathbf{C}(\mathbf{C}^{\prime})^{T}],
\end{equation}
where the rows of $\mathbf{C} \in \mathbb{R}^{N \times K}$ are of the form: $\frac{1}{\sigma^{2}+1}\phi(i)^{\top}$ for $i=1,...,N$ and signature vectors $\phi(i)$ computed via Algorithm 1 for the graph $\mathrm{G}_{\mathbf{U}}$ and a matrix $\mathbf{U}$ defined above. Furthermore, $\mathbf{C}^{\prime}$ is an independent copy of $\mathbf{C}$.
\end{corollary}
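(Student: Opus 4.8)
The plan is to derive Corollary~\ref{cor:case_two} as an essentially immediate consequence of Theorem~\ref{thm:core} together with the algebraic identity in Eq.~\ref{eq:case_two}, so the bulk of the work is bookkeeping rather than new analysis. First I would verify that the matrix $\mathbf{U}$ defined by $u_{i,j} = \frac{\sigma^{2}}{\sigma^{2}+1}\frac{\mathbf{W}(i,j)}{\sqrt{\mathrm{deg}_{\mathbf{W}}(i)\mathrm{deg}_{\mathbf{W}}(j)}}$ actually satisfies the hypotheses of Theorem~\ref{thm:core}, namely that $\mathbf{U}$ is positive, symmetric, and has spectral radius $\rho(\mathbf{U})<1$. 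Symmetry is clear from the symmetry of $\mathbf{W}$ and the symmetric form of the normalization. Positivity of the entries follows from $\mathbf{W}(i,j)\ge 0$ and $\sigma^{2}>0$. The spectral-radius condition is where I would be slightly careful: from Eq.~\ref{eq:case_two} we have $\mathbf{U} = \mathbf{I}_{N} - \frac{1}{\sigma^{2}+1}(\mathbf{I}_{N}+\sigma^{2}\tilde{\mathbf{L}}_{\mathrm{G}})$, and since the eigenvalues of the symmetrically normalized Laplacian $\tilde{\mathbf{L}}_{\mathrm{G}}$ lie in $[0,2]$, the eigenvalues of $\mathbf{I}_{N}+\sigma^{2}\tilde{\mathbf{L}}_{\mathrm{G}}$ lie in $[1,1+2\sigma^{2}]$, whence the eigenvalues of $\mathbf{U}$ lie in $[1-\frac{1+2\sigma^{2}}{\sigma^{2}+1},\,1-\frac{1}{\sigma^{2}+1}] = [-\frac{\sigma^{2}}{\sigma^{2}+1},\,\frac{\sigma^{2}}{\sigma^{2}+1}]$, all of absolute value strictly below $1$. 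This confirms $\rho(\mathbf{U})<1$ and licenses the application of the theorem.

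Next I would chain the two facts. By Theorem~\ref{thm:core} applied to this $\mathbf{U}$, the matrix $\mathbf{B}$ whose rows are $\phi(i)^{\top}$ (computed by Algorithm~\ref{alg:1} on $\mathrm{G}_{\mathbf{U}}$) satisfies $(\mathbf{I}_{N}-\mathbf{U})^{-2} = \mathbb{E}[\mathbf{B}(\mathbf{B}^{\prime})^{\top}]$, where $\mathbf{B}^{\prime}$ is an independent copy. I would then invert Eq.~\ref{eq:case_two} to write $(\mathbf{I}_{N}+\sigma^{2}\tilde{\mathbf{L}}_{\mathrm{G}})^{-1} = \frac{1}{\sigma^{2}+1}(\mathbf{I}_{N}-\mathbf{U})^{-1}$ and square it to obtain $(\mathbf{I}_{N}+\sigma^{2}\tilde{\mathbf{L}}_{\mathrm{G}})^{-2} = \frac{1}{(\sigma^{2}+1)^{2}}(\mathbf{I}_{N}-\mathbf{U})^{-2}$. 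Substituting the expectation expression gives $(\mathbf{I}_{N}+\sigma^{2}\tilde{\mathbf{L}}_{\mathrm{G}})^{-2} = \mathbb{E}\!\left[\frac{1}{\sigma^{2}+1}\mathbf{B}\left(\frac{1}{\sigma^{2}+1}\mathbf{B}^{\prime}\right)^{\top}\right]$, where the two scalar factors $\frac{1}{\sigma^{2}+1}$ distribute onto $\mathbf{B}$ and $\mathbf{B}^{\prime}$ respectively.

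Finally I would identify $\mathbf{C} = \frac{1}{\sigma^{2}+1}\mathbf{B}$, so that its rows are exactly $\frac{1}{\sigma^{2}+1}\phi(i)^{\top}$ as stated, and $\mathbf{C}^{\prime} = \frac{1}{\sigma^{2}+1}\mathbf{B}^{\prime}$ is the corresponding independent copy; this yields $(\mathbf{I}_{N}+\sigma^{2}\tilde{\mathbf{L}}_{\mathrm{G}})^{-2} = \mathbb{E}[\mathbf{C}(\mathbf{C}^{\prime})^{\top}]$, which is the claim. One small caveat to address is the stated dimension $\mathbf{C}\in\mathbb{R}^{N\times K}$ versus $\mathbf{B}\in\mathbb{R}^{N\times N}$: the value $K$ refers to the (possibly trimmed) effective dimensionality of the signature vectors after applying the anchor-point or JLT constructions of Sec.~\ref{sec:anchor_jlt}, and in the untrimmed case $K=N$; I would note that the unbiasedness is preserved under those trimming operations exactly as argued there, so the identity holds with the reduced dimension as well.

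I do not expect any genuine obstacle here, since the result is a corollary: the only step requiring a moment of thought is verifying $\rho(\mathbf{U})<1$ via the spectrum of $\tilde{\mathbf{L}}_{\mathrm{G}}$, and the only point needing a clarifying remark is the interpretation of $K$ and the reconciliation with the dimension-reduction variants.
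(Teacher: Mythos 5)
Your proof is correct and takes essentially the same route as the paper, which obtains Corollary~\ref{cor:case_two} immediately by applying Theorem~\ref{thm:core} to the matrix $\mathbf{U}$ from Eq.~\ref{eq:case_two} and absorbing the scalar $\frac{1}{\sigma^{2}+1}$ into each factor. Your explicit check that $\rho(\mathbf{U})<1$ (via the spectrum of $\tilde{\mathbf{L}}_{\mathrm{G}}$ lying in $[0,2]$) and your remark reconciling the dimension $K$ with the trimming constructions are details the paper leaves implicit, but they fill in the same argument rather than alter it.
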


\subsubsection{GRFs \& regularized Laplacian kernel}

Let us now consider the case $d=1$. From Corollary \ref{cor:case_two}, we immediately get the following:
\begin{corollary}
\label{cor:case_one}
Consider a kernel matrix of the $1$-regularied Laplacian kernel for a given graph $\mathrm{G}$ with $N$ nodes $\{1,2,...,N\}$.
Then the following holds:
\begin{equation}
(\mathbf{I}_{N}+\sigma^{2}\tilde{\mathbf{L}}_{\mathrm{G}})^{-1} = \mathbb{E}[\mathbf{C}\mathbf{D}^{T}],
\end{equation}
where matrix $\mathbf{D} \in \mathbb{R}^{N \times K}$ is defined as: $\mathbf{D} = (\mathbf{I}_{N}+\sigma^{2}\tilde{\mathbf{L}}_{\mathrm{G}})\mathbf{C}^{\prime}$ and $\mathbf{C},\mathbf{C}^{\prime} \in \mathbb{R}^{N \times K}$ are as in Corollary \ref{cor:case_two}.
\end{corollary}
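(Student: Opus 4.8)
The plan is to derive Corollary~\ref{cor:case_one} directly from Corollary~\ref{cor:case_two} by a purely algebraic manipulation, exploiting the fact that $\mathbf{D}$ is obtained from $\mathbf{C}^{\prime}$ by left-multiplication by the \emph{deterministic} matrix $\mathbf{A} \overset{\mathrm{def}}{=} \mathbf{I}_{N}+\sigma^{2}\tilde{\mathbf{L}}_{\mathrm{G}}$. The key observation is that the target kernel matrix is $\mathbf{A}^{-1}$, whereas Corollary~\ref{cor:case_two} already gives us an unbiased estimator of $\mathbf{A}^{-2}$. Since $\mathbf{A}^{-1} = \mathbf{A}\cdot\mathbf{A}^{-2}$, inserting the known factor of $\mathbf{A}$ should convert an estimator of the squared inverse into an estimator of the inverse itself.

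First I would write down, from Corollary~\ref{cor:case_two}, the identity
\begin{equation}
\mathbf{A}^{-2} = (\mathbf{I}_{N}+\sigma^{2}\tilde{\mathbf{L}}_{\mathrm{G}})^{-2} = \mathbb{E}[\mathbf{C}(\mathbf{C}^{\prime})^{\top}].
\end{equation}
Next I would left-multiply both sides by the deterministic matrix $\mathbf{A}$. Because $\mathbf{A}$ is a fixed (non-random) matrix, it commutes with the expectation operator, so
\begin{equation}
\mathbf{A}^{-1} = \mathbf{A}\,\mathbf{A}^{-2} = \mathbf{A}\,\mathbb{E}[\mathbf{C}(\mathbf{C}^{\prime})^{\top}] = \mathbb{E}[\mathbf{A}\,\mathbf{C}(\mathbf{C}^{\prime})^{\top}].
\end{equation}
The final step is to relocate the factor $\mathbf{A}$ from the left of $\mathbf{C}$ to the right of $(\mathbf{C}^{\prime})^{\top}$ so that it attaches to $\mathbf{C}^{\prime}$ instead, matching the stated form $\mathbb{E}[\mathbf{C}\mathbf{D}^{\top}]$. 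Here I would use the symmetry of $\mathbf{A}$: since $\tilde{\mathbf{L}}_{\mathrm{G}}$ is symmetric (it is a symmetrically normalized Laplacian), $\mathbf{A}^{\top}=\mathbf{A}$, and therefore $\mathbf{A}\,\mathbf{C}(\mathbf{C}^{\prime})^{\top} = \mathbf{C}(\mathbf{A}(\mathbf{C}^{\prime}))^{\top} $ does \emph{not} hold naively; rather the clean route is to place $\mathbf{A}$ on the other copy from the start. Concretely, by the same commutation argument applied on the right,
\begin{equation}
\mathbf{A}^{-1} = \mathbf{A}^{-2}\,\mathbf{A} = \mathbb{E}[\mathbf{C}(\mathbf{C}^{\prime})^{\top}]\,\mathbf{A} = \mathbb{E}[\mathbf{C}\,((\mathbf{C}^{\prime})^{\top}\mathbf{A})] = \mathbb{E}[\mathbf{C}(\mathbf{A}\mathbf{C}^{\prime})^{\top}] = \mathbb{E}[\mathbf{C}\mathbf{D}^{\top}],
\end{equation}
where the penultimate equality uses $\mathbf{A}^{\top}=\mathbf{A}$ and the last uses the definition $\mathbf{D}=\mathbf{A}\mathbf{C}^{\prime}$.

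I do not expect any genuine obstacle here: the entire argument is a two-line consequence of linearity of expectation together with the symmetry of $\mathbf{A}$. The only points requiring a moment of care are (i) confirming that $\mathbf{A}$ is symmetric so that transposing $\mathbf{A}\mathbf{C}^{\prime}$ reproduces $(\mathbf{C}^{\prime})^{\top}\mathbf{A}$, and (ii) choosing to multiply on the correct side ($\mathbf{A}^{-2}\mathbf{A}$ rather than $\mathbf{A}\mathbf{A}^{-2}$) so that the extra factor lands on $\mathbf{C}^{\prime}$ and yields exactly the claimed $\mathbf{D}=\mathbf{A}\mathbf{C}^{\prime}$. Both hold immediately, so the proof is essentially a direct corollary.
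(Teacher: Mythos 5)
Your proposal is correct and is exactly the argument the paper intends: the paper states Corollary \ref{cor:case_one} as following ``immediately'' from Corollary \ref{cor:case_two}, and your spelled-out version (right-multiply $\mathbb{E}[\mathbf{C}(\mathbf{C}^{\prime})^{\top}] = \mathbf{A}^{-2}$ by the deterministic matrix $\mathbf{A} = \mathbf{I}_{N}+\sigma^{2}\tilde{\mathbf{L}}_{\mathrm{G}}$, pull $\mathbf{A}$ inside the expectation by linearity, and use $\mathbf{A}^{\top}=\mathbf{A}$ to identify $(\mathbf{C}^{\prime})^{\top}\mathbf{A} = (\mathbf{A}\mathbf{C}^{\prime})^{\top} = \mathbf{D}^{\top}$) is precisely that implicit derivation, including the correct choice of side so that the factor lands on $\mathbf{C}^{\prime}$.
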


Corollary \ref{cor:case_one} provides a GRF mechanism for the regularized Laplacian kernel, since it implies that we can write 
$\mathrm{K}^{1}_{\mathrm{lap}}$ as:
\begin{equation}
\mathrm{K}^{1}_{\mathrm{lap}}(i,j) = \mathbb{E}[\phi(i)^{\top}\psi(j)],
\end{equation}
where $\phi(i)$ and $\psi(i)$ for $i=1,...,N$ are the rows of $\mathbf{C}$ and $\mathbf{D}$ respectively. Notice that this is the so-called \textit{asymmetric RF setting} \cite{hrfs} (where we use a different transformation for $i$ and $j$). The approximate kernel matrix is not necessarily symmetric, but its expectation is always symmetric and the estimation is unbiased. However it is also easy to obtain here the GRF mechanism providing symmetric approximate kernel matrix and maintaining unbiasedness. It suffices to define:
\begin{align}
\begin{split}
\phi^{\prime}(i) = \frac{1}{\sqrt{2}}[\phi(i)^{\top},\psi(i)^{\top}]^{\top},\\
\psi^{\prime}(j) = \frac{1}{\sqrt{2}}[\psi(j)^{\top},\phi(j)^{\top}]^{\top},\\
\end{split}
\end{align}
and the corresponding approximate kernel matrix is of the form: $\frac{1}{2}(\mathbf{C}\mathbf{D}^{\top}+\mathbf{D}\mathbf{C}^{\top})$.

\subsubsection{Going beyond $d=2$}
Let us see now what happens when we take $d>2$.
We will define GRFs recursively, having the variants for $d=1$ and $d=2$ already established in previous sections (they will serve as a base for our induction analysis). Take some $d$ and assume that the following GRF mechanism defined by the unbiased low-rank decomposition of the $d$-regularized Laplacian kernel matrix exists for some $\mathbf{X},\mathbf{Y} \in \mathbb{R}^{N \times K}$:
\begin{equation}
(\mathbf{I}_{N}+\sigma^{2}\tilde{\mathbf{L}}_{\mathrm{G}})^{-d} = \mathbb{E}[\mathbf{X}\mathbf{Y}^{\top}]
\end{equation}
Then, using Corollary \ref{cor:case_two}, we can rewrite:
\begin{equation}
(\mathbf{I}_{N}+\sigma^{2}\tilde{\mathbf{L}}_{\mathrm{G}})^{-d-2}= \mathbb{E}[\mathbf{X}\mathbf{Y}^{\top}\mathbf{C}(\mathbf{C}^{\prime})^{\top}],
\end{equation}
if we assume that matrices $\mathbf{C},\mathbf{C}^{\prime} \in \mathbb{R}^{N \times K}$ are constructed independently from $\mathbf{X}$ and $\mathbf{Y}$.
We can then compute in time $O(NK^{2})$ matrix $\mathbf{S}=\mathbf{Y}^{\top}\mathbf{C}$ and then conduct its SVD-decomposition:
$\mathbf{S} = \mathbf{Z}\Sigma\mathbf{Z}^{\top}$ in time $O(K^{3})$ for $\mathbf{Z},\Sigma \in \mathbb{R}^{K \times K}$ and where $\Sigma$ is diagonal. Then we can rewrite:
\begin{equation}
(\mathbf{I}_{N}+\sigma^{2}\tilde{\mathbf{L}}_{\mathrm{G}})^{-d-2}= 
\mathbb{E}[(\mathbf{XZ}\Sigma^{\frac{1}{2}})(\mathbf{C}^{\prime}\mathbf{Z}\Sigma^{\frac{1}{2}})^{\top}]
\end{equation}
for $\mathbf{XZ}\Sigma^{\frac{1}{2}}, \mathbf{C}^{\prime}Z\Sigma^{\frac{1}{2}} \in \mathbb{R}^{N \times K}$. The random feature vectors can be then given as the rows of $\mathbf{XZ}\Sigma^{\frac{1}{2}}$ and $\mathbf{C}^{\prime}Z\Sigma^{\frac{1}{2}}$.

We see that starting with $d=1,2$ we can then follow the above procedure to produce GRF-based low-rank decomposition of the kernel matrix for the $d$-regularized Laplacian kernel for any $d \in \mathbb{N}_{+}$.

\subsubsection{GRFs for fast graph kernel methods}

We will assume here that the graphs $\mathrm{G}$ under consideration have $e$ edges 
for $e=o(N^{2})$ which is a reasonable assumption in machine learning and thus graph adjacency list representation is a default choice. Using the analysis from Sec. \ref{sec:sign-vectors}, we conclude that GRFs can be computed in time $O(\frac{Nmt_{\mathrm{sam}}}{p_{\mathrm{term}}}+Ne)$ for $d=2$. If uniform sampling is the strategy used by $\mathrm{sample}$ then time complexity can be simplified to $O(\frac{Nm}{p_{\mathrm{term}}}+Ne)$. If JLT-based dimensionality is applied, an additional cost $O(N^{2}K)$ is incurred. This can be further reduced to $O(N^{2}\log(K))$ if unbiased structured JLT variants are applied \cite{choromanski2017unreasonable}. For 
$d=1$, an additional time $O(Ne)$ (for computing matrix $\mathbf{D}$, see: Corollary \ref{cor:case_one}) is incurred. Similar analysis can be applied to larger $d$. It is easy to see that for $d>2$ GRFs can be computed in time cubic in $K$ and linear in $d$. Thus for $d>2$ they are useful if $K \ll N$. We call this stage \textbf{graph pre-processing}. If the brute-force approach is applied, pre-processing involves explicit computation of the graph kernel matrix and for the graph kernels considered in this paper (as well as many others) takes time $O(N^{3})$. In some applications, pre-processing is a one-time procedure (per training), but in many others needs to be applied several times in training (e.g. in graph Transformers, where graph kernels can be used to topologically modulate regular attention mechanism) \cite{block-toeplitz}.

When the graph pre-processing is completed, graph kernels usually interact with the downstream algorithm via their matrix-vector multiplication interface, e.g. the algorithm computes $\mathbf{Kx}$ for a given graph kernel matrix $\mathbf{K}=[\mathrm{K}(i,j)]_{i,j=1,...,N} \in \mathbf{R}^{N \times N}$ and a series of vectors $\mathbf{x} \in \mathbb{R}^{N}$ (see: KMeans clustering with graph kernels: \cite{kernel-kmeans}). We will refer to this phase as \textbf{inference}. Brute-force inference can be trivially computed in time $O(N^{2})$ per matrix-vector multiplication.
Using GRFs strategy, an \textbf{unbiased} estimation of the matrix-vector product can be computed via a series of matrix-vector multiplications obtained from (a chain of) decompositions  constructed to define GRFs. For $d \neq 2$ those decompositions involve at least three matrices and thus the GRFs do not even need to be explicitly constructed.

For instance, for $d=2$, inference can be conducted in time $O(NK)$ and this is a pretty conservative bound. Even if one applies $K = \Omega(N)$, most of the entries of random feature vectors can be still equal to zero (if $p_{\mathrm{term}}$ is not too small) and in such a setting $\mathrm{svmaps}$ from Sec. \ref{sec:sign-vectors} can be applied to provide sub-quadratic in $N$ time complexity. For $d=1$, an additional cost $O(Ne)$ of multiplications with matrices $\mathbf{I}_{N}+\sigma^{2}\tilde{\mathbf{L}}_{\mathrm{G}}$ needs to be incurred. For $d>2$, inference can be directly conducted in time $O(NK)$.
\vspace{-2mm}
\subsubsection{Systems of linear equations \& GRFs}

It is easy to see that there is nothing particularly special about the Laplacian  matrix 
$\mathbf{I}_{N}+\sigma^{2}\tilde{\mathbf{L}}_{\mathrm{G}}$ from Corollary \ref{cor:case_one}
and that for any invertible matrix of the form: $\lambda(\mathbf{I}_{N}-\mathbf{U})$ and symmetric $\mathbf{U} \in \mathbb{R}^{N \times N}$ of $o(N^{2})$ nonzero entries, one can find a decomposition:
\begin{equation}
\vspace{-3mm}
\frac{1}{\lambda}(\mathbf{I}_{N}-\mathbf{U})^{-1} = \mathbb{E}[\mathbf{C}\mathbf{D}^{\top}]
\end{equation}
in sub-cubic time, using our methods. That however provides immediately a sub-cubic randomized algorithm for unbiasedly solving linear systems of the form: $(\mathbf{I}_{N}-\mathbf{U})\mathbf{x}=\mathbf{b}$ for any given $\mathbf{b} \in \mathbb{R}^{N}$
as: $\mathbf{x}=\lambda (\mathbf{C}(\mathbf{D}^{\top}\mathbf{b}))$, where brackets indicate the order of computations.

\subsubsection{Final remarks}

The construction of GRFs presented in this section can be thought of as a low-rank decomposition of the kernel matrix $\mathbf{K}$. Then one can argue that instead of using presented algorithm, a standard low-rank decomposition method can be applied (such as spectral analysis). This however defeats the main goal of this paper - sub-cubic time complexity (the matrix to be decomposed would need to be explicitly constructed) and unbiased estimation (such an approximation would be biased). 
The latter remains a problem for the alternative approaches even if more refined low-rank decomposition techniques not enforcing materialization of the kernel matrix $\mathbf{K}$ are applied (such methods would however rely on efficient algorithms for computing $\mathbf{Kx}$ which are non-trivial: for $d=1$ would need to solve systems of linear equations with symmetric matrices as sub-routines).

%\vspace{-3mm}
\section{Theoretical results}
\label{sec:thm}

\subsection{GRFs provide unbiased graph kernel estimation}

We start by proving Theorem\ref{thm:core}. Concentration results for GRFs in the base setting, where the sampler chooses a neighbor uniformly at random, are given in Sec. \ref{sec:appendix_concentration}. We leave as an exciting open question the analysis of the concentration results beyond second moment methods.
\begin{proof}
For any two nodes $a,b \in \mathrm{V}(\mathrm{G}_{\mathbf{U}})$, denote by $\Omega(a,b)$ the set of walks from $a$ to $b$ in $\mathrm{G}_{\mathbf{U}}$. 
Furthermore, symbol $\subseteq_{\mathrm{pre}}$ indicates than one walk is a prefix of the other.
Then:
\begin{align}
\begin{split}
\label{eq:unbiased}
\phi(i)^{\top}\phi(j) = \frac{1}{m^{2}} \sum_{x \in \mathrm{V}(\mathrm{G}_{\mathbf{U}})} \sum_{k=1}^{m}\sum_{l=1}^{m}
\sum_{\omega_{1} \in \Omega(i,x)} \\ \sum_{\omega_{2} \in \Omega(j,x)}  w(\omega_{1}) \cdot w(\omega_{2}) 
\prod_{s=1}^{l_{1}} \frac{(p^{i}_{k,s})^{-1}}{1-p_{\mathrm{term}}} \prod_{t=1}^{l_{2}} \frac{(p^{j}_{l,t})^{-1}}{1-p_{\mathrm{term}}} \\
\mathbbm{1}[\omega_{1} \subseteq_{\mathrm{pre}} \bar{\Omega}(k,i)]\mathbbm{1}[\omega_{2} \subseteq_{\mathrm{pre}} \bar{\Omega}(l,j)],
\end{split}
\end{align}
where $\bar{\Omega}(k,i)$ and $\bar{\Omega}(l,j)$ stand for the $k^{th}$ and $l^{th}$ sampled random walk from $i$ and $j$ respectively, function $w$ outputs the product of the edge-weights of its given input walk and 
$p^{i}_{k,s}$ and $p^{j}_{l,t}$ are the probabilities returned by $\mathrm{sample}$ for the $s^{th}/t^{th}$ vertex of the $k^{th}/l^{th}$ sampled random walk starting at $i/j$. Furthermore, $l_{1}$ and $l_{2}$ refer to the number of edges of $\omega_{1}$ and $\omega_{2}$ respectively. Note that those probabilities are in general random variables (if they are chosen as nontrivial functions of the history of sampled walks). 

Therefore, for $\mathrm{len(\omega)}$ denoting the number of edges of $\omega$:
\begin{align}
\begin{split}
\mathbb{E}[\phi(i)^{\top}\phi(j)] = \sum_{x \in \mathrm{V}(\mathrm{G}_{\mathbf{U}})}
\sum_{\omega_{1} \in \Omega(i,x)} \\ \sum_{\omega_{2} \in \Omega(j,x)} w(\omega_{1})w(\omega_{2})
=\sum_{\omega \in \Omega(i,j)}(\mathrm{len(\omega)}+1)w(\omega)
\end{split}
\end{align}
The proof is completed by an observation that:
\begin{equation}
(\mathbf{I}_{N}+2\mathbf{U}+3\mathbf{U}^{2} + ...)(i,j) 
=\sum_{\omega \in \Omega(i,j)}(\mathrm{len(\omega)}+1)w(\omega)
\end{equation}
This in turn follows from the fact that for any $k \geq 0$: $\mathbf{U}^{k}(i,j)$ can be interpreted
as the sum of $w(\omega)$ over all walks $\omega$ from $i$ to $j$ of $k$ edges.
\end{proof}
\vspace{-3mm}
The extension to the anchor point setting is obtained by following the same proof, but with x iterating only over sampled anchor points and results in adding a multiplicative constant, as explained in Sec. \ref{sec:anchor_jlt}. 
\vspace{-3mm}
\subsection{q-GRFs and correlated random walks}

As already mentioned, the idea of q-GRFs is to correlate different random walks initiated in a given node $i$, to "more efficiently" explore the entire graph (in particular, by avoiding reconstructing similar walks) and can be thought of as an analogue of the ORF or low discrepancy sequences techniques applied for regular RFs \cite{qmc-rf}. A natural candidate for the q-GRF variant is the one, where method $\mathrm{sample}$ implements the so-called \textit{reinforced random walk} strategy \cite{kozma}. The probability of choosing a neighbor $w$ of $v$ (see: Sec. \ref{sec:sign-vectors}) can be defined as:
\begin{equation}
p(v,w) = \frac{f(N(v,w))}{\sum_{z \in \mathrm{N}_{v}} f(N(v,z))},
\end{equation}
where $\mathrm{N}(x,y)$ stands for the number of times that an edge $\{x,y\}$ has been already used (across all the walks, or previous walks) and $f:\mathbb{N} \rightarrow \mathbb{R}$ is a fixed function. Note that, since we want to deprioritize edges that were already frequently visited, $f$ should be a decreasing function.

Implementing such a strategy can be done straightforwardly with an additional multiplicative factor $d_{\mathrm{ave}}$ in time-complexity, where $d_{\mathrm{ave}}$ stands for the average degree of a graph node. We leave detailed theoretical and empirical analysis of this class of methods to future work.

\vspace{-3mm}
\section{Experiments}
\label{sec:experiments}

We empirically verify the quality of GRFs via various experiments, including downstream applications of graph kernels.

\subsection{Speed tests}
\label{sec:speed}

Here we compared the total number of FLOPS used by the GRF-variant of of the regularized Laplacian kernel in pre-processing followed by a single inference call involving computing the action of the kernel matrix on a given vector (this is for instance the way in which kernels are applied in the kernelized KMeans algorithm, see: Sec. \ref{sec:kmeans}) with the correesponding time for various linear system solvers as well as a brute-force algorithm. Note that for $d$-regularized Laplacian kernels with $d=1$, inference straightforwardly reduces to solving linear systems of equations. We used graphs of different size. The results are presented in Table 1. GRFs provide substantial reduction of FLOPS, even for smaller graphs. We took $p_{\mathrm{term}}=0.1$ since it worked well in several other tests (see: Sec. \ref{sec:frobenius}, Sec. \ref{sec:kmeans}).

\begin{table}[!htb]
    \begin{center}
    \begin{footnotesize}
        \scalebox{0.90}{\begin{tabular}{@{}lccccc@{}}
        \toprule
         GRF & BF & Gauss-Seidel & Conjugate Gradient & Jacobi \\
        \midrule
         960K & 512.64M & 6400K & 512M & 6400K \\
        \midrule
        1.4M & 1001M & 10M & 1000M & 10M \\
        \midrule
         10.2M & 27B & 90M & 27B & 90M \\
        \midrule
        \bottomrule
        \end{tabular}}
    \end{footnotesize}
    \vspace{-3mm}
    \caption{The comparison of the number of FLOPS for the setting from Sec. \ref{sec:speed} for different linear systems solvers and GRFs. Different rows correspond to: $N=800$, $N=1000$ and $N=3000$.}
    \end{center}
\label{tab:app}
\end{table}

% [!htb]
\begin{figure*}[!htb]
  \includegraphics[width=\linewidth]{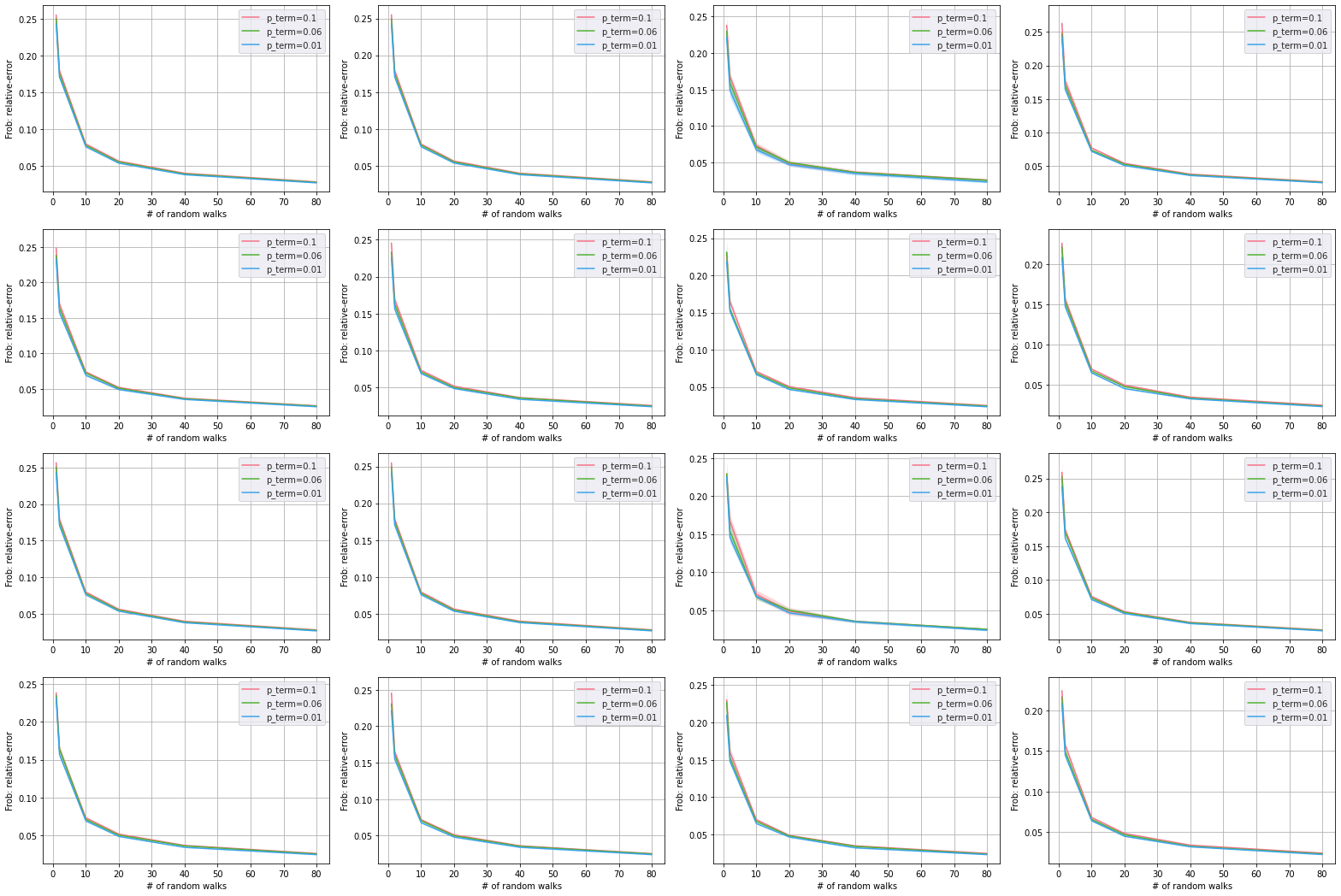}
\vspace{-9mm}
\caption{\small{Relative Frobenius norm error for the setting described in Sec. \ref{sec:frobenius}. First two rows correspond to $d=1$ and last two to $d=2$. We considered the following graphs (from upper-left to lower-right): Erdos-Renyi graph with edge prob. 0.4 (ER-0.4, $N=1000$),
ER-0.1 ($N=1000$), \textrm{dolphins} ($N=62$), \textrm{eurosis} ($N=1272$), \textrm{Networking} ($N=1249$), \textrm{Databases} ($N=1046$), \textrm{Encryption}-\textrm{and}-\textrm{Compression} 
 ($N=864$), \textrm{Hardware}-\textrm{and}-\textrm{Architecture} ($N=763$).}}
\label{fig:frob}
% \vspace{-2.4mm}
\end{figure*}

\subsection{Relative Frobenius norm error}
\label{sec:frobenius}

We took several undirected graphs of different sizes and computed for them groundtruth kernel matrices $\mathbf{K}^{d}=[\mathrm{K}^{d}_{\mathrm{lap}}(i,j)]_{i,j=1,...,N}$ using $d$-regularized Laplacian kernels with $d=1,2$. We then computed their counterparts $\widehat{\mathbf{K}}^{d}$ obtained via GRFs and the corresponding relative Frobenius norm error defined as:
\begin{equation}
\epsilon = \frac{\|\mathbf{K}^{d}-\widehat{\mathbf{K}}^{d}\|_{\mathrm{F}}}{\|\mathbf{K}^{d}\|_{\mathrm{F}}}
\end{equation}
We considered different termination probabilities: $p_{\mathrm{term}} \in \{0.1, 0.06, 0.01\}$ leading to average random walk lengths: $10, \frac{50}{3}, 100$ respectively as well as different number $m$ of random walks for the construction of the GRF-vector: $m=1,2,10,20,40,80$. We fixed: $\sigma^{2}=0.2$. The reported empirical relative Frobenium norm errors were obtained by averaging over $s=10$ independent experiments. We also reported their corresponding standard deviations.

The results are presented in Fig. \ref{fig:frob}. We see that different $p_{\mathrm{term}}$ result in very similar error-curves and thus in practice it suffices to take $p_{\mathrm{term}}=0.1$ (average walk length $l=10$), regardless of the density of the graph, even for graphs with $N>1000$ nodes. Interestingly, error-curves are also very  
similar across all the graphs. Furthermore, small number of random walks $m=80$ suffices to obtain $\epsilon < 2\%$. The standard deviations are reported on the plots, but since they are very small, we also include them in Sec. \ref{seec:app_exp}.
\vspace{-3mm}
\subsection{KMeans algorithm with graph kernels}
\label{sec:kmeans}

We applied GRFs in kernelized KMeans algorithm \cite{kernel-kmeans} to cluster graph nodes using graph kernels.
As in the previous section, we applied $d$-regularized Laplacian graph kernels with $d=1,2$. Furthermore, we have measured the impact of the additional dimensionality reduction techniques (anchor points and JLT-based reduction) on the clustering quality. For each variant, we reported the so-called \textit{clustering error} defined as $\epsilon = \frac{P_{\mathrm{error}}}{{N \choose 2}}$, where $\mathrm{P_{\mathrm{error}}}$ stands for the number of pairs of nodes $\{i,j\}$ of the graph that were classified differently by the groundtruth algorithm and its GRF-variant (e.g. they belonged to the same cluster in the groundtruth variant and two different clusters in the GRF one or vice versa). In all the experiments we used $\sigma^{2}=0.2$,  $p_{\mathrm{term}}^{-1}m \leq 400$ and $K \leq 0.6N$. We chose the no of clusters $\mathrm{nb}\_\mathrm{clusters}=3$. The results are in Table 2. GRFs, even with more accurate of the: anchor points and JLTs methods provide accurate approximation, for some graphs (e.g. \textrm{citeseer} with $N=2120$) with error $<1\%$.
\vspace{-2.5mm}
\begin{table}[!htb]
    \begin{center}
    \begin{footnotesize}
        \scalebox{0.80}{\begin{tabular}{@{}lcccc@{}}
        \toprule
        \textbf{Kernel, method} & \textrm{citeseer} & \textrm{Databases} & \textrm{polbooks} & \textrm{karate} \\
        \midrule
        {d=1, \textrm{reg}} & 0.020 & 0.170 & 0.28 & 0.11   \\
        {d=1, \textrm{min}(\textrm{jlt}, \textrm{anc})} & 0.010 & 0.097 & 0.323 & 0.314   \\
        \midrule
        {d=2, \textrm{reg}} & 0.008 & 0.140 & 0.12 & 0.032   \\ 
        {d=2, \textrm{min}(\textrm{jlt}, \textrm{anc})} & 0.010 & 0.098 & 0.264 & 0.198   \\ 
        \midrule
        \bottomrule
        \end{tabular}}
    \end{footnotesize}
    \vspace{-3mm}
    \caption{Clustering errors for the experiments from Sec. \ref{sec:kmeans} (for the $d$-regularized Laplacian kernel). Different tested methods: (1) regular GRFs ($\textrm{reg}$, no compression), (2) GRFs with the more accurate among: JLT ($\textrm{jlt}$), and anchor points ($\textrm{anc}$) techniques.}
    \end{center}
\label{tab:app}
\end{table}
%\vspace{-2.5mm}
% 40 random walks citeseer

% 0.4N citeseer
% 0.6N Databases, for jlt and regular: p=0.01 nb_random_walks = 2

%\vspace{-7.5mm}
\section{Conclusion}
\label{sec:conclusion}
We have proposed in this paper a new paradigm of graph random features (GRFs) for the unbiased and computationally efficient estimation of various kernels defined on the nodes of the graph. GRFs rely on the families of random walks initiated in different nodes, depositing loads in the visited vertices of the graph. The computation of GRFs can be also easily paralellized.  We have provided detailed theoretical analysis of our method and exhaustive experiments, involving in particular kmeans clustering on graphs.

\bibliography{references}

\begin{thebibliography}{44}
\providecommand{\natexlab}[1]{#1}
\providecommand{\url}[1]{\texttt{#1}}
\expandafter\ifx\csname urlstyle\endcsname\relax
  \providecommand{\doi}[1]{doi: #1}\else
  \providecommand{\doi}{doi: \begingroup \urlstyle{rm}\Url}\fi

\bibitem[Boser et~al.(1992)Boser, Guyon, and Vapnik]{svn-kernel}
Boser, B.~E., Guyon, I., and Vapnik, V.
\newblock A training algorithm for optimal margin classifiers.
\newblock In Haussler, D. (ed.), \emph{Proceedings of the Fifth Annual {ACM}
  Conference on Computational Learning Theory, {COLT} 1992, Pittsburgh, PA,
  USA, July 27-29, 1992}, pp.\  144--152. {ACM}, 1992.
\newblock \doi{10.1145/130385.130401}.
\newblock URL \url{https://doi.org/10.1145/130385.130401}.

\bibitem[Bressan(2020)]{bressan}
Bressan, M.
\newblock Faster algorithms for sampling connected induced subgraphs.
\newblock \emph{CoRR}, abs/2007.12102, 2020.
\newblock URL \url{https://arxiv.org/abs/2007.12102}.

\bibitem[Campbell(2002)]{kernel-6}
Campbell, C.
\newblock Kernel methods: a survey of current techniques.
\newblock \emph{Neurocomputing}, 48\penalty0 (1-4):\penalty0 63--84, 2002.
\newblock \doi{10.1016/S0925-2312(01)00643-9}.
\newblock URL \url{https://doi.org/10.1016/S0925-2312(01)00643-9}.

\bibitem[Canu \& Smola(2006)Canu and Smola]{kernel-3}
Canu, S. and Smola, A.~J.
\newblock Kernel methods and the exponential family.
\newblock \emph{Neurocomputing}, 69\penalty0 (7-9):\penalty0 714--720, 2006.
\newblock \doi{10.1016/j.neucom.2005.12.009}.
\newblock URL \url{https://doi.org/10.1016/j.neucom.2005.12.009}.

\bibitem[Ch{\'{a}}vez et~al.(2020)Ch{\'{a}}vez, Liu, Ghysels, Li, and
  Rebrova]{kernel-ridge-regreession}
Ch{\'{a}}vez, G., Liu, Y., Ghysels, P., Li, X.~S., and Rebrova, E.
\newblock Scalable and memory-efficient kernel ridge regression.
\newblock In \emph{2020 {IEEE} International Parallel and Distributed
  Processing Symposium (IPDPS), New Orleans, LA, USA, May 18-22, 2020}, pp.\
  956--965. {IEEE}, 2020.
\newblock \doi{10.1109/IPDPS47924.2020.00102}.
\newblock URL \url{https://doi.org/10.1109/IPDPS47924.2020.00102}.

\bibitem[Chen et~al.(2016)Chen, Li, Wang, and Lui]{graphlet-faster}
Chen, X., Li, Y., Wang, P., and Lui, J. C.~S.
\newblock A general framework for estimating graphlet statistics via random
  walk.
\newblock \emph{Proc. {VLDB} Endow.}, 10\penalty0 (3):\penalty0 253--264, 2016.
\newblock \doi{10.14778/3021924.3021940}.
\newblock URL \url{http://www.vldb.org/pvldb/vol10/p253-chen.pdf}.

\bibitem[Choromanski et~al.(2018{\natexlab{a}})Choromanski, Downey, and
  Boots]{psrn}
Choromanski, K., Downey, C., and Boots, B.
\newblock Initialization matters: Orthogonal predictive state recurrent neural
  networks.
\newblock In \emph{6th International Conference on Learning Representations,
  {ICLR} 2018, Vancouver, BC, Canada, April 30 - May 3, 2018, Conference Track
  Proceedings}. OpenReview.net, 2018{\natexlab{a}}.
\newblock URL \url{https://openreview.net/forum?id=HJJ23bW0b}.

\bibitem[Choromanski et~al.(2018{\natexlab{b}})Choromanski, Rowland,
  Sarl{\'{o}}s, Sindhwani, Turner, and Weller]{grfs}
Choromanski, K., Rowland, M., Sarl{\'{o}}s, T., Sindhwani, V., Turner, R.~E.,
  and Weller, A.
\newblock The geometry of random features.
\newblock In Storkey, A.~J. and P{\'{e}}rez{-}Cruz, F. (eds.),
  \emph{International Conference on Artificial Intelligence and Statistics,
  {AISTATS} 2018, 9-11 April 2018, Playa Blanca, Lanzarote, Canary Islands,
  Spain}, volume~84 of \emph{Proceedings of Machine Learning Research}, pp.\
  1--9. {PMLR}, 2018{\natexlab{b}}.
\newblock URL \url{http://proceedings.mlr.press/v84/choromanski18a.html}.

\bibitem[Choromanski et~al.(2020)Choromanski, Likhosherstov, Dohan, Song, Gane,
  Sarlos, Hawkins, Davis, Mohiuddin, Kaiser, et~al.]{choromanski2020rethinking}
Choromanski, K., Likhosherstov, V., Dohan, D., Song, X., Gane, A., Sarlos, T.,
  Hawkins, P., Davis, J., Mohiuddin, A., Kaiser, L., et~al.
\newblock Rethinking attention with performers.
\newblock \emph{arXiv preprint arXiv:2009.14794}, 2020.

\bibitem[Choromanski et~al.(2022{\natexlab{a}})Choromanski, Lin, Chen, Zhang,
  Sehanobish, Likhosherstov, Parker{-}Holder, Sarl{\'{o}}s, Weller, and
  Weingarten]{block-toeplitz}
Choromanski, K., Lin, H., Chen, H., Zhang, T., Sehanobish, A., Likhosherstov,
  V., Parker{-}Holder, J., Sarl{\'{o}}s, T., Weller, A., and Weingarten, T.
\newblock From block-toeplitz matrices to differential equations on graphs:
  towards a general theory for scalable masked transformers.
\newblock In Chaudhuri, K., Jegelka, S., Song, L., Szepesv{\'{a}}ri, C., Niu,
  G., and Sabato, S. (eds.), \emph{International Conference on Machine
  Learning, {ICML} 2022, 17-23 July 2022, Baltimore, Maryland, {USA}}, volume
  162 of \emph{Proceedings of Machine Learning Research}, pp.\  3962--3983.
  {PMLR}, 2022{\natexlab{a}}.
\newblock URL \url{https://proceedings.mlr.press/v162/choromanski22a.html}.

\bibitem[Choromanski et~al.(2017)Choromanski, Rowland, and
  Weller]{choromanski2017unreasonable}
Choromanski, K.~M., Rowland, M., and Weller, A.
\newblock The unreasonable effectiveness of structured random orthogonal
  embeddings.
\newblock \emph{Advances in neural information processing systems}, 30, 2017.

\bibitem[Choromanski et~al.(2022{\natexlab{b}})Choromanski, Lin, Chen,
  Sehanobish, Ma, Jain, Varley, Zeng, Ryoo, Likhosherstov, Kalashnikov,
  Sindhwani, and Weller]{hrfs}
Choromanski, K.~M., Lin, H., Chen, H., Sehanobish, A., Ma, Y., Jain, D.,
  Varley, J., Zeng, A., Ryoo, M.~S., Likhosherstov, V., Kalashnikov, D.,
  Sindhwani, V., and Weller, A.
\newblock Hybrid random features.
\newblock In \emph{The Tenth International Conference on Learning
  Representations, {ICLR} 2022, Virtual Event, April 25-29, 2022}.
  OpenReview.net, 2022{\natexlab{b}}.
\newblock URL \url{https://openreview.net/forum?id=EMigfE6ZeS}.

\bibitem[Chung \& Yau(1999)Chung and Yau]{chung}
Chung, F. R.~K. and Yau, S.
\newblock Coverings, heat kernels and spanning trees.
\newblock \emph{Electron. J. Comb.}, 6, 1999.
\newblock \doi{10.37236/1444}.
\newblock URL \url{https://doi.org/10.37236/1444}.

\bibitem[Cortes \& Vapnik(1995)Cortes and Vapnik]{svn}
Cortes, C. and Vapnik, V.
\newblock Support-vector networks.
\newblock \emph{Mach. Learn.}, 20\penalty0 (3):\penalty0 273--297, 1995.
\newblock \doi{10.1007/BF00994018}.
\newblock URL \url{https://doi.org/10.1007/BF00994018}.

\bibitem[Cumby \& Roth(2003)Cumby and Roth]{kernel-1}
Cumby, C.~M. and Roth, D.
\newblock On kernel methods for relational learning.
\newblock In Fawcett, T. and Mishra, N. (eds.), \emph{Machine Learning,
  Proceedings of the Twentieth International Conference {(ICML} 2003), August
  21-24, 2003, Washington, DC, {USA}}, pp.\  107--114. {AAAI} Press, 2003.
\newblock URL \url{http://www.aaai.org/Library/ICML/2003/icml03-017.php}.

\bibitem[Dhillon et~al.(2004)Dhillon, Guan, and Kulis]{kernel-kmeans}
Dhillon, I.~S., Guan, Y., and Kulis, B.
\newblock Kernel k-means: spectral clustering and normalized cuts.
\newblock In Kim, W., Kohavi, R., Gehrke, J., and DuMouchel, W. (eds.),
  \emph{Proceedings of the Tenth {ACM} {SIGKDD} International Conference on
  Knowledge Discovery and Data Mining, Seattle, Washington, USA, August 22-25,
  2004}, pp.\  551--556. {ACM}, 2004.
\newblock \doi{10.1145/1014052.1014118}.
\newblock URL \url{https://doi.org/10.1145/1014052.1014118}.

\bibitem[Fang et~al.(2021)Fang, Zhang, Meng, and Liang]{fourier-kernels}
Fang, J., Zhang, Q., Meng, Z., and Liang, S.
\newblock Structure-aware random fourier kernel for graphs.
\newblock In Ranzato, M., Beygelzimer, A., Dauphin, Y.~N., Liang, P., and
  Vaughan, J.~W. (eds.), \emph{Advances in Neural Information Processing
  Systems 34: Annual Conference on Neural Information Processing Systems 2021,
  NeurIPS 2021, December 6-14, 2021, virtual}, pp.\  17681--17694, 2021.

\bibitem[Figueiredo \& Garetto(2017)Figueiredo and Garetto]{rrws}
Figueiredo, D.~R. and Garetto, M.
\newblock On the emergence of shortest paths by reinforced random walks.
\newblock \emph{{IEEE} Trans. Netw. Sci. Eng.}, 4\penalty0 (1):\penalty0
  55--69, 2017.
\newblock \doi{10.1109/TNSE.2016.2618063}.
\newblock URL \url{https://doi.org/10.1109/TNSE.2016.2618063}.

\bibitem[Ghojogh et~al.(2021)Ghojogh, Ghodsi, Karray, and Crowley]{kernel-2}
Ghojogh, B., Ghodsi, A., Karray, F., and Crowley, M.
\newblock Reproducing kernel hilbert space, mercer's theorem, eigenfunctions,
  nystr{\"{o}}m method, and use of kernels in machine learning: Tutorial and
  survey.
\newblock \emph{CoRR}, abs/2106.08443, 2021.
\newblock URL \url{https://arxiv.org/abs/2106.08443}.

\bibitem[Goemans \& Williamson(2004)Goemans and Williamson]{goemans}
Goemans, M.~X. and Williamson, D.~P.
\newblock Approximation algorithms for
  m\({}_{\mbox{ax}}\)-3-c\({}_{\mbox{ut}}\) and other problems via complex
  semidefinite programming.
\newblock \emph{J. Comput. Syst. Sci.}, 68\penalty0 (2):\penalty0 442--470,
  2004.
\newblock \doi{10.1016/j.jcss.2003.07.012}.
\newblock URL \url{https://doi.org/10.1016/j.jcss.2003.07.012}.

\bibitem[Goldin(2010)]{linear-regression}
Goldin, R.~F.
\newblock Review: Statistical models: Theory and practice (revised edition).
  cambridge university press, new york, new york, 2009, xiv + 442 pp., {ISBN}
  978-0-521-74385-3, {\textdollar}40. by david a. freedman.
\newblock \emph{Am. Math. Mon.}, 117\penalty0 (9):\penalty0 844--847, 2010.
\newblock \doi{10.4169/000298910X521733}.
\newblock URL \url{https://doi.org/10.4169/000298910X521733}.

\bibitem[Johnson(1984{\natexlab{a}})]{jlt-base}
Johnson, W.~B.
\newblock Extensions of lipschitz mappings into hilbert space.
\newblock \emph{Contemporary mathematics}, 26:\penalty0 189--206,
  1984{\natexlab{a}}.

\bibitem[Johnson(1984{\natexlab{b}})]{johnson1984extensions}
Johnson, W.~B.
\newblock Extensions of lipschitz mappings into a hilbert space.
\newblock \emph{Contemp. Math.}, 26:\penalty0 189--206, 1984{\natexlab{b}}.

\bibitem[K{\'{e}}gl et~al.(1998)K{\'{e}}gl, Krzyzak, and Niemann]{rbfs}
K{\'{e}}gl, B., Krzyzak, A., and Niemann, H.
\newblock Radial basis function networks in nonparametric classification and
  function learning.
\newblock In Jain, A.~K., Venkatesh, S., and Lovell, B.~C. (eds.),
  \emph{Fourteenth International Conference on Pattern Recognition, {ICPR}
  1998, Brisbane, Australia, 16-20 August, 1998}, pp.\  565--570. {IEEE}
  Computer Society, 1998.
\newblock \doi{10.1109/ICPR.1998.711206}.
\newblock URL \url{https://doi.org/10.1109/ICPR.1998.711206}.

\bibitem[Kondor \& Lafferty(2002)Kondor and Lafferty]{diff-kernels}
Kondor, R. and Lafferty, J.~D.
\newblock Diffusion kernels on graphs and other discrete input spaces.
\newblock In Sammut, C. and Hoffmann, A.~G. (eds.), \emph{Machine Learning,
  Proceedings of the Nineteenth International Conference {(ICML} 2002),
  University of New South Wales, Sydney, Australia, July 8-12, 2002}, pp.\
  315--322. Morgan Kaufmann, 2002.

\bibitem[Kontorovich et~al.(2008)Kontorovich, Cortes, and Mohri]{kernel-5}
Kontorovich, L., Cortes, C., and Mohri, M.
\newblock Kernel methods for learning languages.
\newblock \emph{Theor. Comput. Sci.}, 405\penalty0 (3):\penalty0 223--236,
  2008.
\newblock \doi{10.1016/j.tcs.2008.06.037}.
\newblock URL \url{https://doi.org/10.1016/j.tcs.2008.06.037}.

\bibitem[Kozma(2012)]{kozma}
Kozma, G.
\newblock Reinforced random walk, 2012.
\newblock URL \url{https://arxiv.org/abs/1208.0364}.

\bibitem[Leskovec \& Faloutsos(2006)Leskovec and Faloutsos]{leskovec}
Leskovec, J. and Faloutsos, C.
\newblock Sampling from large graphs.
\newblock In Eliassi{-}Rad, T., Ungar, L.~H., Craven, M., and Gunopulos, D.
  (eds.), \emph{Proceedings of the Twelfth {ACM} {SIGKDD} International
  Conference on Knowledge Discovery and Data Mining, Philadelphia, PA, USA,
  August 20-23, 2006}, pp.\  631--636. {ACM}, 2006.
\newblock \doi{10.1145/1150402.1150479}.
\newblock URL \url{https://doi.org/10.1145/1150402.1150479}.

\bibitem[Likhosherstov et~al.(2022)Likhosherstov, Choromanski, Dubey, Liu,
  Sarlos, and Weller]{likhosherstov2022chefs}
Likhosherstov, V., Choromanski, K., Dubey, A., Liu, F., Sarlos, T., and Weller,
  A.
\newblock Chefs' random tables: Non-trigonometric random features.
\newblock In \emph{NeurIPS}, 2022.

\bibitem[Lin et~al.(2020)Lin, Chen, Choromanski, Zhang, and Laroche]{dpp-orfs}
Lin, H., Chen, H., Choromanski, K.~M., Zhang, T., and Laroche, C.
\newblock Demystifying orthogonal monte carlo and beyond.
\newblock In Larochelle, H., Ranzato, M., Hadsell, R., Balcan, M., and Lin, H.
  (eds.), \emph{Advances in Neural Information Processing Systems 33: Annual
  Conference on Neural Information Processing Systems 2020, NeurIPS 2020,
  December 6-12, 2020, virtual}, 2020.

\bibitem[Liu et~al.(2022)Liu, Huang, Chen, and Suykens]{RF-survey}
Liu, F., Huang, X., Chen, Y., and Suykens, J. A.~K.
\newblock Random features for kernel approximation: {A} survey on algorithms,
  theory, and beyond.
\newblock \emph{{IEEE} Trans. Pattern Anal. Mach. Intell.}, 44\penalty0
  (10):\penalty0 7128--7148, 2022.
\newblock \doi{10.1109/TPAMI.2021.3097011}.
\newblock URL \url{https://doi.org/10.1109/TPAMI.2021.3097011}.

\bibitem[Orbanz(2017)]{Orbanz2017SubsamplingLG}
Orbanz, P.
\newblock Subsampling large graphs and invariance in networks.
\newblock \emph{arXiv: Statistics Theory}, 2017.

\bibitem[Rahimi \& Recht(2007)Rahimi and Recht]{rahimi2007random}
Rahimi, A. and Recht, B.
\newblock Random features for large-scale kernel machines.
\newblock \emph{Advances in neural information processing systems}, 20, 2007.

\bibitem[Rahimi \& Recht(2008)Rahimi and Recht]{rksink}
Rahimi, A. and Recht, B.
\newblock Weighted sums of random kitchen sinks: Replacing minimization with
  randomization in learning.
\newblock In Koller, D., Schuurmans, D., Bengio, Y., and Bottou, L. (eds.),
  \emph{Advances in Neural Information Processing Systems 21, Proceedings of
  the Twenty-Second Annual Conference on Neural Information Processing Systems,
  Vancouver, British Columbia, Canada, December 8-11, 2008}, pp.\  1313--1320.
  Curran Associates, Inc., 2008.

\bibitem[Ribeiro et~al.(2022)Ribeiro, Paredes, Silva, Apar{\'{\i}}cio, and
  Silva]{ribeiro}
Ribeiro, P., Paredes, P., Silva, M. E.~P., Apar{\'{\i}}cio, D., and Silva, F.
  M.~A.
\newblock A survey on subgraph counting: Concepts, algorithms, and applications
  to network motifs and graphlets.
\newblock \emph{{ACM} Comput. Surv.}, 54\penalty0 (2):\penalty0 28:1--28:36,
  2022.
\newblock \doi{10.1145/3433652}.
\newblock URL \url{https://doi.org/10.1145/3433652}.

\bibitem[Shervashidze et~al.(2009)Shervashidze, Vishwanathan, Petri, Mehlhorn,
  and Borgwardt]{efficient-graphlet}
Shervashidze, N., Vishwanathan, S. V.~N., Petri, T., Mehlhorn, K., and
  Borgwardt, K.~M.
\newblock Efficient graphlet kernels for large graph comparison.
\newblock In Dyk, D. A.~V. and Welling, M. (eds.), \emph{Proceedings of the
  Twelfth International Conference on Artificial Intelligence and Statistics,
  {AISTATS} 2009, Clearwater Beach, Florida, USA, April 16-18, 2009}, volume~5
  of \emph{{JMLR} Proceedings}, pp.\  488--495. JMLR.org, 2009.
\newblock URL \url{http://proceedings.mlr.press/v5/shervashidze09a.html}.

\bibitem[Smola \& Kondor(2003)Smola and Kondor]{smola-kondor}
Smola, A.~J. and Kondor, R.
\newblock Kernels and regularization on graphs.
\newblock In Sch{\"{o}}lkopf, B. and Warmuth, M.~K. (eds.), \emph{Computational
  Learning Theory and Kernel Machines, 16th Annual Conference on Computational
  Learning Theory and 7th Kernel Workshop, COLT/Kernel 2003, Washington, DC,
  USA, August 24-27, 2003, Proceedings}, volume 2777 of \emph{Lecture Notes in
  Computer Science}, pp.\  144--158. Springer, 2003.
\newblock \doi{10.1007/978-3-540-45167-9\_12}.
\newblock URL \url{https://doi.org/10.1007/978-3-540-45167-9\_12}.

\bibitem[Smola \& Sch{\"{o}}lkopf(2002)Smola and Sch{\"{o}}lkopf]{kernel-4}
Smola, A.~J. and Sch{\"{o}}lkopf, B.
\newblock Bayesian kernel methods.
\newblock In Mendelson, S. and Smola, A.~J. (eds.), \emph{Advanced Lectures on
  Machine Learning, Machine Learning Summer School 2002, Canberra, Australia,
  February 11-22, 2002, Revised Lectures}, volume 2600 of \emph{Lecture Notes
  in Computer Science}, pp.\  65--117. Springer, 2002.
\newblock \doi{10.1007/3-540-36434-X\_3}.
\newblock URL \url{https://doi.org/10.1007/3-540-36434-X\_3}.

\bibitem[Vishwanathan et~al.(2010)Vishwanathan, Schraudolph, Kondor, and
  Borgwardt]{gks}
Vishwanathan, S. V.~N., Schraudolph, N.~N., Kondor, R., and Borgwardt, K.~M.
\newblock Graph kernels.
\newblock \emph{J. Mach. Learn. Res.}, 11:\penalty0 1201--1242, 2010.
\newblock \doi{10.5555/1756006.1859891}.
\newblock URL \url{https://dl.acm.org/doi/10.5555/1756006.1859891}.

\bibitem[Wang et~al.(2011)Wang, Chen, Zhang, Xu, Jin, Hui, Deng, and
  Li]{sna-paper}
Wang, T., Chen, Y., Zhang, Z., Xu, T., Jin, L., Hui, P., Deng, B., and Li, X.
\newblock Understanding graph sampling algorithms for social network analysis.
\newblock In \emph{31st {IEEE} International Conference on Distributed
  Computing Systems Workshops {(ICDCS} 2011 Workshops), 20-24 June 2011,
  Minneapolis, Minnesota, {USA}}, pp.\  123--128. {IEEE} Computer Society,
  2011.
\newblock \doi{10.1109/ICDCSW.2011.34}.
\newblock URL \url{https://doi.org/10.1109/ICDCSW.2011.34}.

\bibitem[Wu et~al.(2019)Wu, Yen, Zhang, Xu, Zhao, Peng, Xia, and
  Aggarwal]{lingfeiwu}
Wu, L., Yen, I.~E., Zhang, Z., Xu, K., Zhao, L., Peng, X., Xia, Y., and
  Aggarwal, C.~C.
\newblock Scalable global alignment graph kernel using random features: From
  node embedding to graph embedding.
\newblock In Teredesai, A., Kumar, V., Li, Y., Rosales, R., Terzi, E., and
  Karypis, G. (eds.), \emph{Proceedings of the 25th {ACM} {SIGKDD}
  International Conference on Knowledge Discovery {\&} Data Mining, {KDD} 2019,
  Anchorage, AK, USA, August 4-8, 2019}, pp.\  1418--1428. {ACM}, 2019.
\newblock \doi{10.1145/3292500.3330918}.
\newblock URL \url{https://doi.org/10.1145/3292500.3330918}.

\bibitem[Yanardag \& Vishwanathan(2015)Yanardag and Vishwanathan]{deep-gks}
Yanardag, P. and Vishwanathan, S. V.~N.
\newblock Deep graph kernels.
\newblock In Cao, L., Zhang, C., Joachims, T., Webb, G.~I., Margineantu, D.~D.,
  and Williams, G. (eds.), \emph{Proceedings of the 21th {ACM} {SIGKDD}
  International Conference on Knowledge Discovery and Data Mining, Sydney, NSW,
  Australia, August 10-13, 2015}, pp.\  1365--1374. {ACM}, 2015.
\newblock \doi{10.1145/2783258.2783417}.
\newblock URL \url{https://doi.org/10.1145/2783258.2783417}.

\bibitem[Yang et~al.(2014)Yang, Sindhwani, Avron, and Mahoney]{qmc-rf}
Yang, J., Sindhwani, V., Avron, H., and Mahoney, M.~W.
\newblock Quasi-monte carlo feature maps for shift-invariant kernels.
\newblock In \emph{Proceedings of the 31th International Conference on Machine
  Learning, {ICML} 2014, Beijing, China, 21-26 June 2014}, volume~32 of
  \emph{{JMLR} Workshop and Conference Proceedings}, pp.\  485--493. JMLR.org,
  2014.
\newblock URL \url{http://proceedings.mlr.press/v32/yangb14.html}.

\bibitem[Yu et~al.(2016)Yu, Suresh, Choromanski, Holtmann-Rice, and
  Kumar]{yu2016orthogonal}
Yu, F. X.~X., Suresh, A.~T., Choromanski, K.~M., Holtmann-Rice, D.~N., and
  Kumar, S.
\newblock Orthogonal random features.
\newblock \emph{Advances in neural information processing systems}, 29, 2016.

\end{thebibliography}
\bibliographystyle{icml2023}

\newpage
\appendix
\onecolumn

\section{Appendix}
\label{sec:appendix}

\subsection{Additional experimental details}
\label{seec:app_exp}
In this section, we report standard deviations for the experiments conducted in Sec. \ref{sec:frobenius}. The results are presented in Table 3.
\begin{table}
    \begin{center}
    \begin{footnotesize}
        \scalebox{0.7}{\begin{tabular}{@{}lcccccc@{}}
        \toprule
        \textbf{Kernel, Graph, $p_{\mathrm{term}}$} & m=1 & m=2 & m=10 & m=20 & m=40 & m=80 \\
        \midrule

        {d=1, ER-0.4-1000, 0.1} & 0.0013625896465110972 & 0.0007259063790268863 & 0.0001903398359981377 & 0.00011763026820191872 & 8.79471573831236e-05 & 4.29321194947306e-05 \\
        {d=1, ER-0.4-1000, 0.06} & 0.0007220475266445387 & 0.0005435175726472963 & 0.00013151409601958063 & 9.575658877244471e-05 & 4.3906087463097626e-05 & 4.0755077316485244e-05 \\
        {d=1, ER-0.4-1000, 0.01} & 0.0003671696761976026 & 0.00022105611802712477 & 9.098043091730836e-05 & 7.626805728093205e-05 & 3.874109025559475e-05 & 3.7022093881032226e-05 \\
        \midrule
        {d=1, ER-0.1-1000, 0.1} & 0.015608995370960629 & 0.00616767931977833 & 0.004336727765953796 & 0.003934111721006881 & 0.0027504647840658337 & 0.0013502125948369828\\
        {d=1, ER-0.1-1000, 0.06} & 0.017347549335088377 & 0.011169983712293842 & 0.004103948449818662 & 0.002123051971417801 & 0.0015286651824786544 & 0.001809200930986754 \\
        {d=1, ER-0.1-1000, 0.01} & 0.010199780651542109 & 0.008926804414936252 & 0.003593843664943383 & 0.002372436626864906 & 0.002528684013474355 & 0.0013236120502376739 \\
        \midrule
        {d=1, dolphins, 0.1} & 0.015608995370960629 & 0.00616767931977833 & 0.004336727765953796 & 0.003934111721006881 & 0.0027504647840658337 & 0.0013502125948369828 \\
        {d=1, dolphins, 0.06} & 0.017347549335088377 & 0.011169983712293842 & 0.004103948449818662 & 0.002123051971417801 & 0.0015286651824786544 & 0.001809200930986754 \\
        {d=1, dolphins, 0.01} & 0.010199780651542109 & 0.008926804414936252 & 0.003593843664943383 & 0.002372436626864906 & 0.002528684013474355 & 0.0013236120502376739 \\
        \midrule
        {d=1, eurosis, 0.1} & 0.014053698765343884 & 0.0060276707312881305 & 0.00092538589272756 & 0.0006342279681653932 & 0.0004384555835094276 & 0.00018536188888052218 \\
        {d=1, eurosis, 0.06} & 0.013536611535644354 & 0.004751929441074538 & 0.001020005565545354 & 0.000889042372679839 & 0.0005086950138860196 & 0.0002556955468331719 \\
        {d=1, eurosis, 0.01} & 0.012713452113533774 & 0.0037293050211431988 & 0.0007734429750918471 & 0.0006679713650390155 & 0.0004663147033666988 & 0.00029012600990115697 \\
        \midrule
        {d=1, Networking, 0.1} & 0.0146915283223426 & 0.00354911912464601 & 0.0008569845357120783 & 0.0007800162218792217 & 0.000486001643398867 & 0.0003402086731748513 \\
        {d=1, Networking, 0.06} & 0.009070791531361084 & 0.004330451321925086 & 0.0011314015990526772 & 0.0009955841652321848 & 0.0005356434310080785 & 0.0002993746309815702 \\
        {d=1, Networking, 0.01} & 0.007969253518861505 & 0.0035153340017524855 & 0.0015484554089057877 & 0.0009127004247725001 & 0.00024227482240693019 & 0.0002548717948222871 \\
        \midrule
        {d=1, Databases, 0.1} & 0.006982200176332185 & 0.004920247514564321 & 0.0010238037860984882 & 0.0006558270450890158 & 0.00046304862763166114 & 0.0003245028197695966 \\
        {d=1, Databases, 0.06} & 0.0035234877250858113 & 0.004526013840391158 & 0.000963926605473951 & 0.0007692921395363883 & 0.0005198262247493105 & 0.0004310588594538151 \\
        {d=1, Databases, 0.01} & 0.006381620629869565 & 0.004943075493204197 & 0.0008015939178439096 & 0.000780276410600827 & 0.0003613319043567048 & 0.0003127005621608197 \\
        \midrule
        {d=1, Enc\&Comp, 0.1} & 0.0078678393915265 & 0.0048278192814453745 & 0.0010825306683229289 & 0.0008869476479790293 & 0.0007716890723512931 & 0.0003704820179027577 \\
        {d=1, Enc\&Comp, 0.06} & 0.0190931216162695 & 0.0020893222009769137 & 0.0012842314605827185 & 0.0004723607601931406 & 0.0006878263190053518 & 0.0003750554030785581 \\
        {d=1, Enc\&Comp, 0.01} & 0.01836840019303472 & 0.00504474860734935 & 0.0013024704440231251 & 0.0009294029364896054 & 0.000660959802814907 & 0.00020474232102263236 \\
        \midrule
        {d=1, Hard\&Arch, 0.1} & 0.006704189653139892 & 0.004611366876905434 & 0.0009512468516057303 & 0.0012081728737839681 & 0.0007626404640344821 & 0.00040508532045909195 \\
        {d=1, Hard\&Arch, 0.06} & 0.005209803104840105 & 0.005063954232425586 & 0.001065299727136788 & 0.0012631414043132634 & 0.0006979985117773265 & 0.000421532809850973 \\
        {d=1, Hard\&Arch, 0.01} & 0.006795583742146745 & 0.00274347868585054 & 0.0007186205865051595 & 0.0007365735158683657 & 0.00045503009550719795 & 0.00040966993435182316 \\
        \midrule
        \midrule
        \midrule
        {d=2, ER-0.4-1000, 0.1} & 0.0010997014167145687 & 0.0005611717968776918 & 0.00027274809547358435 & 0.00011320568751584864 & 6.757544191642404e-05 & 6.039762637275414e-05 \\
        {d=2, ER-0.4-1000, 0.06} & 0.0008458403328815557 & 0.0004256206092542753 & 0.0001255914552193544 & 8.877177278468484e-05 & 4.854379424147713e-05 & 4.386961653719971e-05 \\
        {d=2, ER-0.4-1000, 0.01} & 0.0005840737941493646 & 0.0003048387437939227 & 8.612938808993734e-05 & 8.861321498224955e-05 & 3.118698564760755e-05 & 2.6440001535795706e-05 \\
        \midrule
        {d=2, ER-0.1-1000, 0.1} & 0.0013198005978007825 & 0.0005236255151350093 & 0.00022767131458694158 & 0.00013017955748542738 & 9.70665903350761e-05 & 4.911946309935634e-05 \\
        {d=2, ER-0.1-1000, 0.06} & 0.000923414702719891 & 0.0005865169497721389 & 0.00025410090630458796 & 0.00019059306793781872 & 0.00011444739288479127 & 7.032284297507503e-05 \\
        {d=2, ER-0.1-1000, 0.01} & 0.00033652461295461897 & 0.00046667795838897455 & 0.00027540444251076277 & 0.00010490895428890273 & 0.00013378373593132866 & 7.724500570322393e-05 \\
        \midrule
        {d=2, dolphins, 0.1} & 0.015313307957016969 & 0.008072537726967633 & 0.00571208308059399 & 0.0051191001841705405 & 0.0017181464186888107 & 0.001090082709032633 \\
        {d=2, dolphins, 0.06} & 0.015764248711711566 & 0.009755110708905598 & 0.00389170462370401 & 0.0025002668985388667 & 0.0016741448929614132 & 0.001371113891426613 \\
        {d=2, dolphins, 0.01} & 0.0152287323244012 & 0.006479716073801618 & 0.0024690593909838824 & 0.0016779353904653569 & 0.0018280376993101016 & 0.0008140444788820552 \\
        \midrule
        {d=2, eurosis, 0.1} & 0.01536989939709328 & 0.002985106085538542 & 0.000876946183305746 & 0.0007680934798181345 & 0.0005249300647113408 & 0.0002331259532404271 \\
        {d=2, eurosis, 0.06} & 0.012033428935866071 & 0.0041183672021622205 & 0.0008723728098050511 & 0.0007700295259631496 & 0.0005934744828608486 & 0.00017965740908675555 \\
        {d=2, eurosis, 0.01} & 0.007908289233031857 & 0.003124420288948241 & 0.0012421356753991252 & 0.000522089061052929 & 0.00036706376215862906 & 0.00019123151674677616 \\
        \midrule
        {d=2, Networking, 0.1} & 0.009327696246334569 & 0.003894812593269389 & 0.0012115481023431956 & 0.000687373457322389 & 0.0003709238725270653 & 0.0002917580174452871 \\
        {d=2, Networking, 0.06} & 0.008527029896162816 & 0.004866701669257291 & 0.0010784522973902714 & 0.0007199911928308271 & 0.000574302270555022 & 0.00040290056035626387 \\
        {d=2, Networking, 0.01} & 0.009587174636811118 & 0.003824906545448091 & 0.0014371944936820223 & 0.0003065956016769412 & 0.0003731510983670436 & 0.0003803384907960821 \\
        \midrule
        {d=2, Databases, 0.1} & 0.006982200176332185 & 0.004920247514564321 & 0.0010238037860984882 & 0.0006558270450890158 & 0.00046304862763166114 & 0.0003245028197695966 \\
        {d=2, Databases, 0.06} & 0.0035234877250858113 & 0.004526013840391158 & 0.000963926605473951 & 0.0007692921395363883 & 0.0005198262247493105 & 0.0004310588594538151 \\
        {d=2, Databases, 0.01} & 0.006381620629869565 & 0.004943075493204197 & 0.0008015939178439096 & 0.000780276410600827 & 0.0003613319043567048 & 0.0003127005621608197 \\
        \midrule
        {d=2, Enc\&Comp, 0.1} & 0.018895082497842376 & 0.006724706042286198 & 0.0014845981818248232 & 0.0010458737350497948 & 0.0008081389763835953 & 0.00035644825231180745 \\
        {d=2, Enc\&Comp, 0.06} & 0.01378196975159284 & 0.004100807796697153 & 0.0011827406795993489 & 0.0008530016288033847 & 0.0007814930040936276 & 0.00024164229327689848 \\
        {d=2, Enc\&Comp, 0.01} & 0.0038593031848660315 & 0.002887642103947921 & 0.0009055525946103308 & 0.0006739056569936388 & 0.0005584100220945653 & 0.00042133127128199215 \\
        \midrule
        {d=2, Hard\&Arch, 0.1} & 0.00591662612899018 & 0.003095039050038908 & 0.00126431072738253 & 0.000531917678481419 & 0.00047539738617717955 & 0.0005823370820758574 \\
        {d=2, Hard\&Arch, 0.06} & 0.0062802558107981615 & 0.0028895869476586425 & 0.0017287614112698189 & 0.0012669302897452082 & 0.0005934700975376661 & 0.0003548026338104648 \\
        {d=2, Hard\&Arch, 0.01} & 0.007928049506258186 & 0.0029574141141546616 & 0.0014192560156932896 & 0.0008739945261346017 & 0.0006409254479486006 & 0.0005212426748758263 \\
        \bottomrule
        \end{tabular}}
    \end{footnotesize}
    \vspace{-3mm}
    \caption{Standard deviations for the experiments from Sec. \ref{sec:frobenius} (for the $d$-regularized Laplacian kernel).}
    \end{center}
\label{tab:app}
\end{table}

\subsection{$D$-regularized Laplacian kernels}
\label{sec:app_lap}

We show here that as long as the regularized Laplacian kernel is well-defined, the $d$-regularized variants for $d>1$ are valid positive definite kernels. The following is true:

\begin{theorem}
If a matrix $\mathbf{I}_{N}+\sigma^{2}\tilde{\mathbf{L}}_{\mathrm{G}}$ is invertible and the inverse has positive entries, then the $d$-regularized Laplacian kernel is a valid positive definite kernel.
\end{theorem}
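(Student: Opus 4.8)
The plan is to reduce the claim to a purely spectral statement about the symmetric matrix $\mathbf{A} := \mathbf{I}_{N} + \sigma^{2}\tilde{\mathbf{L}}_{\mathrm{G}}$, whose $d$-th inverse power is by definition the $d$-regularized Laplacian kernel matrix. Since a symmetric matrix defines a valid positive definite kernel on the node set precisely when it is itself positive definite, it suffices to show that $\mathbf{A}^{-d}$ is symmetric positive definite for every $d \in \mathbb{N}_{+}$.

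First I would establish the key structural fact, that $\tilde{\mathbf{L}}_{\mathrm{G}}$ is positive semidefinite. Writing $\mathbf{D}$ for the diagonal degree matrix with entries $\mathrm{deg}_{\mathbf{W}}(v)$ and using that the graph is loopless with nonnegative edge weights, $\tilde{\mathbf{L}}_{\mathrm{G}} = \mathbf{D}^{-1/2}(\mathbf{D}-\mathbf{W})\mathbf{D}^{-1/2}$ is the symmetrically normalized Laplacian, whose quadratic form is
\[
\mathbf{x}^{\top}\tilde{\mathbf{L}}_{\mathrm{G}}\mathbf{x} = \frac{1}{2}\sum_{(v,w)}\mathbf{W}(v,w)\left(\frac{x_{v}}{\sqrt{\mathrm{deg}_{\mathbf{W}}(v)}} - \frac{x_{w}}{\sqrt{\mathrm{deg}_{\mathbf{W}}(w)}}\right)^{2} \ge 0 ,
\]
so all eigenvalues of $\tilde{\mathbf{L}}_{\mathrm{G}}$ are nonnegative.

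Next I would diagonalize. As $\tilde{\mathbf{L}}_{\mathrm{G}}$ is symmetric, write $\tilde{\mathbf{L}}_{\mathrm{G}} = \mathbf{Q}\Lambda\mathbf{Q}^{\top}$ with $\mathbf{Q}$ orthogonal and $\Lambda$ diagonal with nonnegative entries $\lambda_{1},\dots,\lambda_{N}$. Then $\mathbf{A} = \mathbf{Q}(\mathbf{I}_{N}+\sigma^{2}\Lambda)\mathbf{Q}^{\top}$ has eigenvalues $1+\sigma^{2}\lambda_{i} \ge 1 > 0$, so $\mathbf{A}$ is positive definite (which strengthens, and is consistent with, the invertibility hypothesis), and
\[
\mathbf{A}^{-d} = \mathbf{Q}(\mathbf{I}_{N}+\sigma^{2}\Lambda)^{-d}\mathbf{Q}^{\top}
\]
has strictly positive eigenvalues $(1+\sigma^{2}\lambda_{i})^{-d}$. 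Hence $\mathbf{A}^{-d}$ is symmetric positive definite, which is exactly the assertion that the $d$-regularized Laplacian kernel is a valid positive definite kernel. An equivalent short route, once the $d=1$ kernel is granted to be positive semidefinite, is to note $\mathbf{A}^{-d}=(\mathbf{A}^{-1})^{d}$ and invoke that integer powers of a symmetric positive (semi)definite matrix are again positive (semi)definite, by the same diagonalization.

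I expect the only genuine subtlety to be conceptual rather than computational: the positivity-of-entries half of the well-definedness hypothesis is a red herring for this particular conclusion, since symmetry, invertibility and positive-entry inverse do not together imply positive definiteness (a symmetric matrix whose inverse has all positive entries can still carry a negative eigenvalue). The real content is the positive semidefiniteness of the normalized Laplacian, which pins the spectrum of $\mathbf{A}$ to $[1,\,1+2\sigma^{2}]$ and thereby controls every inverse power. The positive-entries condition should instead be read as ensuring that the kernel is well-defined with genuinely positive similarity values, and it resurfaces only when one wants the random-walk/GRF interpretation requiring $\rho(\mathbf{U})<1$.
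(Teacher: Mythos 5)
Your proof is correct, and it takes a genuinely different route from the paper's. The paper sets $\mathbf{A}=\sigma^{2}\tilde{\mathbf{L}}_{\mathrm{G}}$, expands the inverse powers of the kernel matrix as a Neumann-type series $\sum_{n}c_{d,n}\mathbf{A}^{n}$ to conclude they are symmetric, and then inducts on $d$: the $(d+1)$-regularized kernel matrix is written as $\mathbf{X}\mathbf{Y}$, where $\mathbf{X}$ is the $d$-regularized kernel matrix and $\mathbf{Y}$ the $1$-regularized one, and the step is closed by the lemma that a \emph{symmetric} product of two positive definite matrices is positive definite (it is similar to $\mathbf{Y}^{1/2}\mathbf{X}\mathbf{Y}^{1/2}$, hence has positive spectrum). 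You instead argue spectrally in one shot: positive semidefiniteness of $\tilde{\mathbf{L}}_{\mathrm{G}}$ via its quadratic form, hence the spectrum of $\mathbf{I}_{N}+\sigma^{2}\tilde{\mathbf{L}}_{\mathrm{G}}$ lies in $[1,\,1+2\sigma^{2}]$, hence every inverse power is symmetric positive definite by diagonalization. Your version is more elementary and, importantly, more complete: the paper's induction tacitly requires positive definiteness of $\mathbf{Y}=(\mathbf{I}_{N}+\sigma^{2}\tilde{\mathbf{L}}_{\mathrm{G}})^{-1}$ (both as the base case and as a factor in every step), a fact that, as you correctly observe, does \emph{not} follow from the stated hypotheses of invertibility and entrywise-positive inverse, but exactly from the Laplacian positive semidefiniteness that you prove and the paper never makes explicit. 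The paper's opening appeal to Perron--Frobenius is likewise dispensable (and delicate, since $\sigma^{2}\tilde{\mathbf{L}}_{\mathrm{G}}$ has nonpositive off-diagonal entries, so that theorem does not apply to it directly), whereas your argument needs no series convergence at all. What the paper's route buys is only that it avoids writing down an eigendecomposition, at the price of the product-of-positive-definite-matrices lemma; for this statement that is no real saving, since the spectral theorem already delivers everything. Your closing diagnosis --- that the positive-entries hypothesis is immaterial to positive definiteness and matters only for the well-definedness/random-walk interpretation --- is a genuine sharpening of the theorem as stated.
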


\begin{proof}
% We first show that the spectral radius of $\mathbf{A}$ satisfies: $\rho(\mathbf{A}) < 1$. 
Denote $\mathbf{A}=\sigma^{2}\tilde{\mathbf{L}}_{\mathrm{G}}$.
Note first that, by Perron-Frobenius Theorem, we have: $\rho(\mathbf{A}) = \lambda_{\mathrm{max}}(\mathbf{A})$, where $ \lambda_{\mathrm{max}}(\mathbf{A})$ stands for the largest eigenvalue of $\mathbf{A}$ (note that all eigenvalues of $\mathbf{A}$ are real since $\mathbf{A}$ is symmetric). Then for $n$ sufficiently large we have: $\|\mathbf{A}\|^{n} \leq (1-\xi)^{k}$ for some $\xi \in (0,1)$. Thus the Neumann series: $\sum_{n=0}^{\infty} \mathbf{A}^{n}$ converges to: $(\mathbf{I}_{N}-\mathbf{A})^{-1}$. Thus we can rewrite: $(\mathbf{I}_{N}-\mathbf{A})^{-d} = (\sum_{n=0}^{\infty} \mathbf{A}^{n})^{d}$. Therefore we have: $(\mathbf{I}_{N}-\mathbf{A})^{-d} = \sum_{n}c_{d,n}\mathbf{A}^{n}$ for some sequence of coefficients: $(c_{d,0},c_{d,1},...)$. We conclude that $(\mathbf{I}_{N}-\mathbf{A})^{-d}$ is symmetric for any $d=1,2,...$ (since $\mathbf{A}$ is). We can rewrite: $(\mathbf{I}_{N}-\mathbf{A})^{-(d+1)}=\mathbf{X}\mathbf{Y}$, where: $\mathbf{X}=(\mathbf{I}_{N}-\mathbf{A})^{-d}$ and $\mathbf{Y}=(\mathbf{I}_{N}-\mathbf{A})^{-1}$.
We will proceed by induction on $d$. We can then conclude that matrix $(\mathbf{I}_{N}-\mathbf{A})^{-(d+1)}=\mathbf{X}\mathbf{Y}$ is a product of two positive definite matrices. Furthermore, $(\mathbf{I}_{N}-\mathbf{A})^{-(d+1)}$ is symmetric, as we have already observed. But then it is a positive definite matrix.
\end{proof}

\subsection{Concentration results for GRFs with the base sampler}
\label{sec:appendix_concentration}

We inherit the notation from the main body of the paper, in particular from the proof of Theorem \ref{thm:core}.
We also denote: $A(\omega)=\prod_{s=1}^{l_{1}} \frac{\mathrm{deg(v_{s})}}{1-p_{\mathrm{term}}}$, where: $\omega=(v_{1},v_{2},...,v_{l_{1+1}})$. Furthermore, for two walks: $\omega_{1},\omega_{2}$,we use the following notation:
$\omega_{1} \subset_{\mathrm{pre}} \omega_{2}$ if $\omega_{1}$ is a \textbf{strict} prefix of $\omega_{2}$ and $\omega_{1} \subseteq_{\mathrm{pre}} \omega_{2}$ if $\omega_{1}$ is a prefix (not necessarily strict) of $\omega_{2}$.
We prove the following concentration result.

\begin{theorem}
Consider an unbiased estimation of the matrix $\mathbf{K}=(\mathbf{I}_{N}-\mathbf{U})^{-2}$ via $\mathbf{M}=\mathbf{B}(\mathbf{B}^{\prime})^{\top}$ for matrices $\mathbf{B},\mathbf{B}^{\prime} \in \mathbb{R}^{N \times N}$, as in Theorem \ref{thm:core}. Then the following is true for any $i,j \in \{1,...,|\mathrm{V}(\mathrm{G}_{\mathbf{U}})|\}$, $i \neq j$:
\begin{equation}
\mathrm{Var}(\mathbf{M}(i,j)) = \frac{1}{m^{2}}(\Lambda-\mathbf{K}^{2}(i,j))
\end{equation}
for $\Lambda$ defined as follows:
\begin{equation}
\Lambda = \sum_{x_{1} \in \mathrm{V}}\sum_{x_{2} \in \mathrm{V}}
\sum_{\omega_{1} \in \Omega(i,x_{1})}\sum_{\omega_{2} \in \Omega(j,x_{1})}\sum_{\omega_{3} \in \Omega(i,x_{2})}
\sum_{\omega_{4} \in \Omega(j,x_{2})} w(\omega_{1})w(\omega_{2})w(\omega_{3})w(\omega_{4}) \Gamma(\omega_{1},\omega_{2},\omega_{3},\omega_{4})
\end{equation}
and where:
\begin{equation}
\Gamma(\omega_{1},\omega_{2},\omega_{3},\omega_{4}) =
\begin{cases}
    \frac{1}{A(\omega_{2})A(\omega_{4})}       & \quad \text{if } \omega_{1} \subset_{\mathrm{pre}} \omega_{2} \textrm{ and } \omega_{3} \subset_{\mathrm{pre}} \omega_{4}\\
    \frac{1}{A(\omega_{2})A(\omega_{3})}       & \quad \text{if } \omega_{1} \subset_{\mathrm{pre}} \omega_{2} \textrm{ and } \omega_{4} \subseteq_{\mathrm{pre}} \omega_{3}\\
    \frac{1}{A(\omega_{1})A(\omega_{4})}       & \quad \text{if } \omega_{2} \subseteq_{\mathrm{pre}} \omega_{1} \textrm{ and } \omega_{3} \subset_{\mathrm{pre}} \omega_{4}\\
    \frac{1}{A(\omega_{1})A(\omega_{3})}       & \quad \text{if } \omega_{2} \subseteq_{\mathrm{pre}} \omega_{1} \textrm{ and } \omega_{4} \subseteq_{\mathrm{pre}} \omega_{3}\\
\end{cases}
\end{equation}
\end{theorem}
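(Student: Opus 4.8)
The plan is to reduce the variance of the full $m$-walk estimator to the second moment of a single walk-pair, and then to evaluate that second moment by a careful expansion over quadruples of walks. Writing $\phi(i)=\frac{1}{m}\sum_{k=1}^{m}\phi_{k}(i)$ and $\phi'(j)=\frac{1}{m}\sum_{l=1}^{m}\psi_{l}(j)$ for the per-walk contributions to the rows of $\mathbf{B}$ and $\mathbf{B}^{\prime}$, we have $\mathbf{M}(i,j)=\frac{1}{m^{2}}\sum_{k,l}Z_{k,l}$ with $Z_{k,l}:=\phi_{k}(i)^{\top}\psi_{l}(j)$. Since the base sampler keeps $\mathcal{H}=\emptyset$, all random walks are mutually independent, so the $m^{2}$ products $Z_{k,l}$ are identically distributed with common mean $\mathbf{K}(i,j)$; this last fact is exactly the single-walk instance of Theorem \ref{thm:core}. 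Consequently $\mathrm{Var}(\mathbf{M}(i,j))=\frac{1}{m^{2}}\big(\Lambda-\mathbf{K}^{2}(i,j)\big)$ once we identify $\Lambda=\mathbb{E}[Z_{1,1}^{2}]$, so the entire task is to compute this single-pair second moment.

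The engine of the computation is a pair of bookkeeping identities specific to the base sampler. First, the load a walk deposits at the endpoint of a prefix $\omega$ is exactly $w(\omega)A(\omega)$, because each transition multiplies the running load by $u_{v,w}\,\mathrm{deg}(v)/(1-p_{\mathrm{term}})$. Second, the probability that a fixed walk $\omega$ arises as a prefix of a sampled random walk is exactly $1/A(\omega)$, since surviving $\mathrm{len}(\omega)$ termination tests and selecting the correct neighbor uniformly at each of those steps contributes $\prod (1-p_{\mathrm{term}})/\mathrm{deg}(v)$. As a warm-up I would recover unbiasedness, $\mathbb{E}[\phi_{1}(i)[x]]=\sum_{\omega\in\Omega(i,x)}w(\omega)$, from these two identities: the load $w(\omega)A(\omega)$ meets the prefix-probability $1/A(\omega)$ and the $A$-factors cancel.

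Next I would expand $Z_{1,1}^{2}=\big(\sum_{x}\phi_{1}(i)[x]\,\psi_{1}(j)[x]\big)^{2}$ as a sum over two endpoints $x_{1},x_{2}$ and four prefix-walks: $\omega_{1}\in\Omega(i,x_{1})$ and $\omega_{3}\in\Omega(i,x_{2})$ read off the single walk $W_{i}$, and $\omega_{2}\in\Omega(j,x_{1})$ and $\omega_{4}\in\Omega(j,x_{2})$ read off the single walk $W_{j}$. The decisive structural observation is that $\omega_{1}$ and $\omega_{3}$ are prefixes of the \emph{same} walk $W_{i}$, hence one is a prefix of the other; likewise $\omega_{2},\omega_{4}$ on $W_{j}$. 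Because $i\neq j$ the walks $W_{i}$ and $W_{j}$ are genuinely distinct, so the expectation factorizes across the two sources, and the joint probability that both $\omega_{1},\omega_{3}$ are prefixes of $W_{i}$ is the survival probability of the longer of the two, namely the reciprocal of its $A$-value (and similarly on the $j$-side). Multiplying the four deposited loads $w(\omega_{r})A(\omega_{r})$ by these joint probabilities, the $A$-factor of the longer prefix in each same-source pair cancels and only the $A$-factor of the shorter prefix survives; enumerating the four combinations of which walk is shorter in each pair produces precisely the case function $\Gamma$ and assembles into the stated expression for $\Lambda$.

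I expect the main obstacle to be the careful accounting of exactly which $A$-factors survive in each of the four nested-prefix cases, and the correct alignment of the two simultaneous deposits at a shared endpoint. A subtle point to handle is the degenerate empty-prefix (the initial ``$+1$'' deposit at the start node): the hypothesis $i\neq j$ is what rules out a deterministic diagonal contribution here and keeps the two source-factorizations clean. Once the surviving factors are tabulated case by case, collecting them into $\Gamma$ and subtracting the squared mean $\mathbf{K}^{2}(i,j)=\mathbb{E}[Z_{1,1}]^{2}$ to pass from the second moment $\Lambda$ to the variance is routine.
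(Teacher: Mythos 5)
Your proposal follows the same skeleton as the paper's proof: reduce $\mathrm{Var}(\mathbf{M}(i,j))$ to the second moment of a single walk-pair term, expand that second moment over quadruples of prefix-walks, observe that two prefixes of the same sampled walk must be nested, and factorize over the two sources using $i\neq j$. Your silent re-indexing of the cases of $\Gamma$ (comparing $\omega_{1}$ with $\omega_{3}$ and $\omega_{2}$ with $\omega_{4}$, rather than the literal $\omega_{1}$ with $\omega_{2}$) is also what the paper's own proof effectively does, so that is not the issue.

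The gap is your final step, where you claim the surviving factors ``produce precisely the case function $\Gamma$''. They do not. Write $\omega^{i}_{\mathrm{long}},\omega^{i}_{\mathrm{short}}$ for the longer/shorter walk of the nested pair $(\omega_{1},\omega_{3})$, and $\omega^{j}_{\mathrm{long}},\omega^{j}_{\mathrm{short}}$ for those of $(\omega_{2},\omega_{4})$. Your (correct) bookkeeping --- four deposits $w(\omega_{r})A(\omega_{r})$ multiplied by the joint prefix probability $1/\left(A(\omega^{i}_{\mathrm{long}})A(\omega^{j}_{\mathrm{long}})\right)$ --- leaves
\begin{equation*}
\prod_{r=1}^{4}w(\omega_{r})\cdot A(\omega^{i}_{\mathrm{short}})\,A(\omega^{j}_{\mathrm{short}}),
\end{equation*}
i.e.\ the $A$-values of the \emph{shorter} prefixes survive \emph{in the numerator}. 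The stated $\Gamma$ is instead $1/\left(A(\omega^{i}_{\mathrm{long}})A(\omega^{j}_{\mathrm{long}})\right)$: a product of \emph{reciprocals} of $A$-values, which is nothing but the bare joint prefix probability with no load factors at all. Since $A(\omega)=\prod_{s}\mathrm{deg}(v_{s})/(1-p_{\mathrm{term}})\geq 1$ for every walk, with equality only for length-zero walks, the two quantities could agree only if all four walks were trivial --- impossible for $i\neq j$, since $\omega_{1}$ and $\omega_{2}$ end at the same vertex $x_{1}$ but start at $i$ and $j$. Hence, for positive edge weights, every nested quadruple contributes strictly more to the sum you derive than to the stated $\Lambda$: carried to completion, your computation yields a different $\Lambda$ and therefore does not prove, but contradicts, the claimed identity.

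It is worth understanding why the paper nevertheless lands on the stated $\Gamma$: its proof defines the per-pair variable $X_{k,l}$ \emph{without} load factors, as $\sum_{x}\sum_{\omega_{1},\omega_{2}}w(\omega_{1})w(\omega_{2})\mathbbm{1}[\omega_{1}\subseteq_{\mathrm{pre}}\tilde{\Omega}(k,i)]\,\mathbbm{1}[\omega_{2}\subseteq_{\mathrm{pre}}\tilde{\Omega}(l,j)]$, so that $\mathbb{E}[X_{1,1}^{2}]$ is a sum of $w$-products times joint prefix probabilities, which is exactly the stated $\Lambda$. The price is that this $X_{k,l}$ is \emph{not} the contribution of a walk pair to $\mathbf{M}(i,j)$: by Eq.~\ref{eq:unbiased}, the true contribution carries the factors $\prod_{s}(p^{i}_{k,s})^{-1}/(1-p_{\mathrm{term}})=A(\omega)$ that you kept. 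So your accounting is the one faithful to Algorithm 1, and it exposes a real inconsistency between the estimator and the stated formula; asserting the match instead of checking it is where your proof breaks. (A smaller shared defect: both you and the paper pass from $\mathbf{M}(i,j)=\frac{1}{m^{2}}\sum_{k,l}Z_{k,l}$ to $\mathrm{Var}(\mathbf{M}(i,j))=\frac{1}{m^{2}}\mathrm{Var}(Z_{1,1})$ by claiming the $m^{2}$ terms are independent; terms sharing a walk index, such as $Z_{k,l}$ and $Z_{k,l'}$, are identically distributed but not independent, and generically have strictly positive covariance, so this reduction also requires justification.)
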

\begin{proof}
We have the following:
\begin{equation}
\mathrm{Var}(\mathbf{M}(i,j)) = \frac{1}{m^{2}} \sum_{k=1}^{m}\sum_{l=1}^{m} X_{k,l},
\end{equation}
where: 
\begin{equation}
X_{k,l}=\sum_{x \in \mathrm{V}}\sum_{\omega_{1} \in \Omega(i,x)}\sum_{\omega_{2} \in \Omega(j,x)}w(\omega_{1})w(\omega_{2})
\mathbbm{1}[\omega_{1} \subseteq_{\mathrm{pre}} \tilde{\Omega}(k,i)]\mathbbm{1}[\omega_{2} \subseteq_{\mathrm{pre}} \tilde{\Omega}(l,j)]
\end{equation}
Note that since different random walks are chosen independently, all random variables $X_{k,l}$ are independent and therefore:
\begin{equation}
\mathrm{Var}(\mathbf{M}(i,j)) = \frac{1}{m^{2}}(\mathbb{E}[X_{1,1}^{2}]-(\mathbb{E}[X_{1,1}])^{2})
\end{equation}
Thus, from the unbiasedness of the estimator, we obtain:
\begin{equation}
\mathrm{Var}(\mathbf{M}(i,j)) = \frac{1}{m^{2}}(\mathbb{E}[X_{1,1}^{2}] - \mathbf{K}^{2}(i,j))
\end{equation}
It suffices to prove that: $\mathbb{E}[X_{1,1}^{2}] = \Lambda$. Note that we have:
\begin{align}
\begin{split}
\mathbb{E}[X_{1,1}^{2}] = \mathbb{E}[\sum_{x_{1} \in \mathrm{V}}\sum_{x_{2} \in \mathrm{V}}
\sum_{\omega_{1} \in \Omega(i,x_{1})}\sum_{\omega_{2} \in \Omega(j,x_{1})}\sum_{\omega_{3} \in \Omega(i,x_{2})}
\sum_{\omega_{4} \in \Omega(j,x_{2})} w(\omega_{1})w(\omega_{2})w(\omega_{3})w(\omega_{4}) \\
\mathbbm{1}[\omega_{1} \subseteq_{\mathrm{pre}} \tilde{\Omega}(1,i)]\mathbbm{1}[\omega_{2} \subseteq_{\mathrm{pre}} \tilde{\Omega}(1,j)] \\
\mathbbm{1}[\omega_{3} \subseteq_{\mathrm{pre}} \tilde{\Omega}(1,i)]\mathbbm{1}[\omega_{4} \subseteq_{\mathrm{pre}} \tilde{\Omega}(1,j)]
]
\end{split}
\end{align}

Thus we have:
\begin{align}
\begin{split}
\mathbb{E}[X_{1,1}^{2}] = \sum_{x_{1} \in \mathrm{V}}\sum_{x_{2} \in \mathrm{V}}
\sum_{\omega_{1} \in \Omega(i,x_{1})}\sum_{\omega_{2} \in \Omega(j,x_{1})}\sum_{\omega_{3} \in \Omega(i,x_{2})}
\sum_{\omega_{4} \in \Omega(j,x_{2})} w(\omega_{1})w(\omega_{2})w(\omega_{3})w(\omega_{4}) \\
\mathbb{P}[\mathbbm{1}[\omega_{1} \subseteq_{\mathrm{pre}} \tilde{\Omega}(1,i)]\mathbbm{1}[\omega_{2} \subseteq_{\mathrm{pre}} \tilde{\Omega}(1,j)] \\
\mathbbm{1}[\omega_{3} \subseteq_{\mathrm{pre}} \tilde{\Omega}(1,i)]\mathbbm{1}[\omega_{4} \subseteq_{\mathrm{pre}} \tilde{\Omega}(1,j)]]
\end{split}
\end{align}

We conclude  the proof, observing that:
\begin{equation}
\Gamma(\omega_{1},\omega_{2},\omega_{3},\omega_{4}) = \mathbb{P}\left[\mathbbm{1}[\omega_{1} \subseteq_{\mathrm{pre}} \tilde{\Omega}(1,i)]\mathbbm{1}[\omega_{2} \subseteq_{\mathrm{pre}} \tilde{\Omega}(1,j)] 
\mathbbm{1}[\omega_{3} \subseteq_{\mathrm{pre}} \tilde{\Omega}(1,i)]\mathbbm{1}[\omega_{4} \subseteq_{\mathrm{pre}} \tilde{\Omega}(1,j)]\right]
\end{equation}
\end{proof}
Completely analogous analysis can be conducted for $i=j$, but the formula is more convoluted since certain pairs of walks 
sampled from $i$ and $j$ are clearly no longer independent as being exactly the same (namely: the kth sampled walk from $i$ and the kth sampled walk from $j$ for $k=1,2,...,m$).

\subsection{Additional derivations leading to Equation \ref{eq:unbiased}}

For Reader's convenience, we will here explain in more detail the derivations leading to Eq. \ref{eq:unbiased}.
For the simplicity, we will assume that $m=1$ since for larger $m$ the formula is obtained simply by averaging over all pairs of random walks from node $i$ and $j$.

Note first that directly from the definition of GRFs and Algorithm 1, we get:

\begin{equation}
\phi(i)^{\top}\phi(j) = \sum_{x \in \mathrm{V}(G_{\mathbf{U}})}
\left (\sum_{r \in \mathcal{K}_{x}(\bar{\Omega}(1,i))}\mathrm{load}^{\bar{\Omega}(1,i)}(x,r) \right)
\left (\sum_{v \in \mathcal{K}_{x}(\bar{\Omega}(1,j))}\mathrm{load}^{\bar{\Omega}(1,j)}(x,v) \right),
\end{equation}
where $\mathcal{K}_{x}(\omega)$ for a given walk $\omega$ returns the set of these time indices when $\omega$ hits $x$ (first vertex of the walk gets time index zero, next one, time index one, etc.) and furthermore $\mathrm{load}^{\omega}(x,c)$ returns the increment of the load in $x$ added to the current $\mathrm{load}$ in time $c$ (see: Algorithm 1, while-loop, point 5). Now note that each $r$ can be identified with its corresponding prefix-subwalk $\omega_{1}$ and each $v$ can be identified with its corresponding prefix-subwalk $\omega_{2}$ 
Finally, observe that the increment of the load that $\omega_{1}$ contributes to is precisely: $w(\omega_{1})\prod_{s=1}^{l_{1}}\frac{(p^{i}_{1,s})^{-1}}{1-p_{\mathrm{term}}}$ and the increment of the load that $\omega_{2}$ contributes to is precisely: $w(\omega_{2})\prod_{t=1}^{l_{2}}\frac{(p^{j}_{1,t})^{-1}}{1-p_{\mathrm{term}}}$ (see: Algorithm 1, while-loop, point 4). That gives us Eq. \ref{eq:unbiased}.

\end{document}